\PassOptionsToPackage{hyphens}{url}
\documentclass[lettersize,journal]{IEEEtran}
\usepackage{amsmath,amsfonts}
\usepackage{algorithmic}
\usepackage{array}
\usepackage[caption=false,font=normalsize]{subfig}
\usepackage{textcomp}
\usepackage{stfloats}
\usepackage{url}
\usepackage{verbatim}
\usepackage{graphicx}
\hyphenation{op-tical net-works semi-conduc-tor IEEE-Xplore}
\def\BibTeX{{\rm B\kern-.05em{\sc i\kern-.025em b}\kern-.08em
    T\kern-.1667em\lower.7ex\hbox{E}\kern-.125emX}}
\usepackage{balance}

\usepackage{colortbl}
\usepackage{tabulary}
\definecolor{Gray}{gray}{0.9}
\usepackage{amsmath}
\usepackage{subfloat}
\usepackage[utf8]{inputenc} 
\usepackage[T1]{fontenc}    
\usepackage{hyperref}       
\usepackage{url}            
\usepackage{booktabs}       
\usepackage{amsfonts}       
\usepackage{nicefrac}       
\usepackage{microtype}      
\usepackage{xcolor}         
\usepackage{anyfontsize}
\usepackage{multirow}
\usepackage{wrapfig}
\usepackage{hyperref}
\usepackage{url}
\usepackage[numbers]{natbib}
\usepackage{enumitem}
\usepackage{amsmath,amsthm,amssymb}
\usepackage{algorithm}
\usepackage{algorithmic}
\usepackage{graphicx}
\usepackage{colortbl}
\usepackage{dsfont}
\usepackage{float} 
\usepackage{caption}
\usepackage{booktabs}
\usepackage{makecell}
\usepackage{diagbox}
\usepackage{fancybox}
\usepackage{bm}
\usepackage{color}

\newtheorem{lemma}{Lemma}
\def\red#1{\textcolor{black}{#1}}

\newcommand{\tabincell}[2]{\begin{tabular}{@{}c#1@{}}#2\end{tabular}}

\begin{document}
\title{Backdoor Attack with Sparse and Invisible Trigger}
\author{Yinghua Gao$^{\ast}$,
        Yiming Li$^{\ast}$,
        Xueluan Gong,
        Zhifeng Li,
        Shu-Tao Xia,
        Qian Wang, \IEEEmembership{Fellow,~IEEE}
\thanks{The first two authors contributed equally to this work.}
\thanks{Yinghua Gao is with the Tsinghua Shenzhen International Graduate School, Tsinghua University, Shenzhen, 518055, China (e-mail: yhgao18@gmail.com).}
\thanks{Yiming Li is with The State Key Laboratory of Blockchain and Data Security, Zhejiang University, Hangzhou, 311200, China and also with College of Computing and Data Science, Nanyang Technological University, Singapore, 639798 (e-mail: liyiming.tech@gmail.com).}
\thanks{Xueluan Gong is with the School of Computer Science, Wuhan University, China (e-mail: xueluangong@whu.edu.cn).}
\thanks{Zhifeng Li is with Tencent Data Platform, Shenzhen, 518057, China (email: michaelzfli@tencent.com).}
\thanks{Shu-Tao Xia is with the Tsinghua Shenzhen International Graduate School, Tsinghua University, Shenzhen, 518055, China, and also with the Research Center of Artificial Intelligence, Peng Cheng Laboratory, Shenzhen, 518000, China (e-mail: xiast@sz.tsinghua.edu.cn).}
\thanks{Qian Wang is with the School of Cyber Science and Engineering, Wuhan University, China (e-mail: qianwang@whu.edu.cn). }
\thanks{Corresponding Author: Yiming Li (e-mail: \href{mailto:liyiming.tech@gmail.com}{liyiming.tech@gmail.com}).}
}

\markboth{IEEE Transactions on Information Forensics and Security}%
{IEEE Transactions on Information Forensics and Security}


\maketitle

\begin{abstract}
Deep neural networks (DNNs) are vulnerable to backdoor attacks, where the adversary manipulates a small portion of training data such that the victim model predicts normally on the benign samples but classifies the triggered samples as the target class. The backdoor attack is an emerging yet threatening training-phase threat, leading to serious risks in DNN-based applications. In this paper, we revisit the trigger patterns of existing backdoor attacks. We reveal that they are either visible or not sparse and therefore are not stealthy enough. More importantly, it is not feasible to simply combine existing methods to design an effective sparse and invisible backdoor attack. To address this problem, we formulate the trigger generation as a bi-level optimization problem with sparsity and invisibility constraints and propose an effective method to solve it. The proposed method is dubbed sparse and invisible backdoor attack (SIBA). We conduct extensive experiments on benchmark datasets under different settings, which verify the effectiveness of our attack and its resistance to existing backdoor defenses. The codes for reproducing main experiments are available at \url{https://github.com/YinghuaGao/SIBA}.

\end{abstract}

\begin{IEEEkeywords}
Backdoor Attack, Invisibility, Sparsity, Trustworthy ML, AI Security
\end{IEEEkeywords}

\section{Introduction}
\IEEEPARstart{D}{eep} neural networks (DNNs) have demonstrated their effectiveness and been widely deployed in many mission-critical applications ($e.g.$, facial recognition \cite{liu2006spatio,li2016mutual,luo2022memory}). Currently, training a well-performed model generally requires a large amount of data and computational consumption that are costly. Accordingly, \red{researchers and developers usually choose to exploit third-party training resources ($e.g.$, open-sourced data, cloud computing platforms, and pre-trained models) to alleviate training burdens in practice}.

\textcolor{black}{However, the convenience of using third-party training resources may also lead to new security threats. One of the most emerging yet severe threats is called the backdoor attack \cite{gu2019badnets,wang2021stop,Gong2021,li2022backdoor,dong2023one,gong2023kaleidoscope}, where the adversaries intend to implant a hidden backdoor to victim models during the training process. The backdoored models behave normally on predicting benign samples whereas their predictions will be maliciously changed to the adversary-specified target class whenever the backdoor is activated by the adversary-specified trigger pattern. {Backdoor attacks severely reduce the model’s reliability and attract tremendous attention from various domains. A recent industry report \cite{kumar2020adversarial} demonstrated that companies are concerned about backdoor attacks and rank them as the fourth among the most popular security threats. Besides, government agencies also placed a high priority on backdoor research. For instance, U.S. intelligence community \cite{RN16} launched a new funding program to defend against backdoor attacks and some other threats.}}


The design of trigger patterns is one of the most important factors in backdoor attacks. Currently, there are many different types of trigger patterns \cite{nguyen2021wanet,cheng2021deep,gong2023kaleidoscope}. Arguably, patch-based triggers \cite{gu2019badnets,li2022few,qi2023revisiting} and additive triggers \cite{barni2019new,zhao2020clean,li2021invisible} are the most classical and widely adopted ones among all trigger patterns\footnote{\textcolor{black}{The trigger patterns vary depending on the specific tasks. In this work, we focus on image classification tasks.} }. \textcolor{black}{Specifically, patch-based triggers are extremely sparse since they only modify a small number of pixels. Accordingly, they can be implemented as stickers, making the attacks more feasible in the physical world. However, they are visible to human eyes and can not evade inspection. On the other hand, additive triggers are usually invisible but modify almost all pixels, restricting the feasibility in real scenarios.} \textcolor{black}{Overall, it is crucial for a backdoor trigger to maintain both sparseness and imperceptibility since the sparseness benefits the practical implementation while imperceptibility helps to evade immediate detection and post hoc forensic analysis. However, previous works \cite{gu2019badnets, barni2019new, zhao2020clean, li2021invisible, li2022few, qi2023revisiting} only satisfy at most one requirement.} An intriguing question arises: \emph{Is it possible to design a trigger pattern for effective backdoor attacks that is both sparse and invisible?}

The answer to the aforementioned question is positive. \textcolor{black}{In general, the most straightforward method is to introduce a (random) sparse mask to the patterns of existing invisible backdoor attacks during their generation process. However, as we will show in the experiments, this method is not effective in many cases, especially when the image size is relatively large. Its failure is mostly because the position of trigger areas is also critical for their success, especially when the trigger size and perturbation strength are limited. Based on these understandings, in this paper, we propose to design sparse and invisible backdoor attacks by optimizing trigger areas and patterns simultaneously. Specifically, we formulate it as a bi-level optimization problem with sparsity and invisibility constraints. The upper-level problem is to minimize the loss on poisoned samples via optimizing the trigger, while the lower-level one is to minimize the loss on all training samples via optimizing the model weights. In particular, this optimization problem is difficult to solve directly due to the high non-convexity and poor convergence of gradient-based methods. To alleviate these problems, we exploit a pre-trained surrogate model to reduce the complexity of lower-level optimization and derive an alternate projected method to satisfy the $L_\infty$ and $L_0$ constraints.} We conduct extensive experiments on benchmark datasets to demonstrate the effectiveness of the proposed method and its feasibility under different settings. Besides, as we will show in the experiments, the generated sparse trigger pattern contains semantic information about the target class. \textcolor{black}{It indicates our attack may serve as the potential path toward explaining DNNs, which is an unexpected finding. }

Our main contributions can be summarized as follows.

\begin{itemize}
    \item We reveal the potential limitations of current backdoor attacks that the triggers could not satisfy the sparsity and invisibility constraints simultaneously. 
    \item We formulate the sparse and invisible backdoor attack as a bi-level optimization and develop an effective and efficient method to solve it.
    \item We conduct extensive experiments on benchmark datasets, verifying the effectiveness of our attack and its resistance to potential backdoor defenses.
\end{itemize}

\vspace{0.3em}
The rest of this paper is organized as follows: We briefly review some related works in Section \ref{sec:relatedwork} and introduce the proposed method in Section \ref{sec:method}. We present the experimental results and analysis in Section \ref{sec:exp} and conclude the whole paper in Section \ref{sec:conclusion} at the end.

\section{Related Work}
\label{sec:relatedwork}

\subsection{Backdoor Attack}
Backdoor attacks aim to inject backdoor behaviors to the victim model such that the attacked model behaves normally on benign test samples but predicts the target class whenever the trigger is attached to the test samples. Trigger design is the core of backdoor attacks and a large corpus of works are devoted to proposing better triggers. In general, existing trigger designs can be roughly divided into two main categories, including trigger patches and additive perturbations, as follows. 

\vspace{0.3em}
\noindent \textbf{Backdoor Attacks with Patch-based Triggers.} BadNets \cite{gu2019badnets} is the first backdoor attack designed with the patch-based trigger. In general, the adversaries randomly select a few benign samples from the original dataset and `stamp' the pre-defined black-and-white trigger patch to those images and reassign their labels to a target class. These modified samples (dubbed `poisoned samples') associated with remaining benign samples will be released as the poisoned dataset to victim users. After that, Chen \textit{et al.} \cite{chen2017targeted} proposed a blended injection strategy to make the poisoned samples hard to be noticed by humans by introducing trigger transparency. Recently, Li \textit{et al.} \cite{li2021backdoor} discussed how to design physical backdoor attacks that are still effective when the poisoned samples are directly captured by digital devices ($e.g.$, camera) in real-world scenarios where trigger angles and positions could be changed. Most recently, Li \textit{et al.} \cite{li2022untargeted} adopted patch-based triggers to design the first untargeted backdoor attack. In general, patch-based triggers are the most classical and even the default setting for new tasks \cite{li2022few,luo2023untargeted,chen2023trojdiff}. \textcolor{black}{The distinct convenience is that the triggers exhibit remarkable sparsity, and therefore, poisoned images can be obtained by attaching the stickers, facilitating their threats in physical scenarios \cite{gu2019badnets,li2022backdoor}. The adversaries usually limited the sparsity ($i.e.$, trigger size) of patch-based triggers for stealthiness. However, although a high trigger transparency may be used, the perturbations are still visible to a large extent since the trigger patterns are significantly different from the replaced portions in the poisoned images.} 

\vspace{0.3em}
\noindent \textbf{Backdoor Attacks based on Additive Perturbations.} Recently, using additive perturbations as trigger patterns becomes popular in backdoor attacks. For example, Zhao \textit{et al.} \cite{zhao2020clean} adopted the target universal adversarial perturbation as the trigger to design an effective clean-label backdoor attacks where the target label is consistent with the ground-truth label of the poisoned samples; Nguyen \textit{et al.} \cite{nguyen2021wanet} generated trigger patterns by image warping; Li \textit{et al.} \cite{li2021invisible} adopted a pre-trained attribute encoder to generate additive trigger patterns, inspired by deep image steganography. \textcolor{black}{Compared to patch-based backdoor attacks, these methods are more controllable regarding trigger stealthiness since the adversaries can easily ensure invisibility by limiting the maximum perturbation size of trigger patterns. However, to the best of our knowledge, almost all existing methods need to modify the whole image for poisoning, restricting the feasibility in the physical world}. How to design attacks with invisible and sparse trigger patterns remains an important open question and worth further explorations.

Recently, there were also a few works exploiting and designing backdoor attacks for positive purposes \citep{li2023black,guo2023domain,lin2021you,ya2024towards}, which are out of the scope of this paper.


\subsection{Backdoor Defense}
Currently, there are some methods (dubbed `backdoor defenses') to reduce the backdoor threats. In general, existing defenses can be divided into \textcolor{black}{five} main categories, including \textbf{(1)} the detection of poisoned training samples, \textbf{(2)} poison suppression, \textcolor{black}{\textbf{(3)} backdoor removal, \textbf{(4)} the detection of poisoned testing samples, and \textbf{(5)} the detection of attacked models}. 

Specifically, the first type of defense intends to filter out poisoned training samples \cite{chen2018detecting, wang2019neural,hayase2021spectre}. They are either based on the representation differences between poisoned and benign samples at intermediate layers of the model or directly attempt to reverse the trigger pattern; Poison suppression \cite{li2021anti,huang2022backdoor,tang2023setting} depresses the effectiveness of poisoned samples during the training process to prevent the creation of hidden backdoors; Backdoor removal aims to remove the hidden backdoors in give pre-trained models \cite{liu2018fine, zeng2022adversarial,li2024nearest}; The forth type of defense \cite{chou2020sentinet,gao2021design,guo2023scale} targets the detection of poisoned testing samples, \textcolor{black}{and the last type of defense determines whether a given model is backdoored based on some model properties \cite{wang2019neural,xu2021detecting, xu2024towards}}. 


\subsection{Sparse Optimization}

Sparse optimization requires the most elements of the variable to be zero, which usually brings unanticipated benefits such as interpretability \cite{fan2020sparse, zhu2021sparse} or generalization \cite{zhou2022sparse, li2022sparse}. It has been extensively studied in various applications such as basis pursuit denoising \cite{fevotte2006sparse}, compressed sensing \cite{candes2006robust}, source coding \cite{donoho1998data}, model pruning \cite{li2019compressing, yang2020automatic} and adversarial attacks \cite{croce2019sparse, fan2020sparse, zhu2021sparse}. The popular approaches to solve sparse optimization include relaxed approximation method \cite{candes2005decoding,xu2012l_, zhang2010analysis} which penalized the original objective with regularizers such as $L_1$-norm, top-$k$ norm and Schatten $L_p$ norm, and proximal gradient method \cite{beck2013sparsity, jain2014iterative, lu2014iterative} which exploited the proximal operator that can be evaluated analytically. In this paper, we provide a unified formulation of sparse and invisible backdoor attacks and derive a projected method to practically optimize the trigger. We notice that similar techniques were used in recent adversarial attacks \cite{croce2019sparse, dong2020greedyfool}. However, our method differs from them in that \textbf{(1)} Existing methods were test-time attacks while our method is training-time attack. \textbf{(2)} These papers focused on data-wise optimization while our method studies on group-wise optimization, which is a much more challenging task.


\section{The Proposed Method}
\label{sec:method}
\subsection{Preliminaries}

\noindent {\bf{Threat Model.}} In this paper, we study the image classification task where the model outputs a $C$-class probability vector: $f_{\bm \theta}:\mathbb{R}^d\rightarrow [0, 1]^C$.  Given a training set $\mathcal{D}=\left\{{\bm x}_i, y_i\right\}_{i=1}^N$, $\bm x\in \mathbb{R}^d$, $y\in \left[C\right]=\left\{1, 2, \cdots, C\right\}$, the parameters of classifier are optimized by the empirical risk minimization: 
\[\min_{\bm \theta} \sum_{({\bm x},y)\in \mathcal{D}} \mathcal{L}(f_{\bm \theta}({\bm x}), y),\]
where $\mathcal{L}$ represents the loss function ($e.g.$, the cross entropy loss). 
\textcolor{black}{Following the analysis framework in \cite{biggio2018wild}, we demonstrate our threat model from four perspectives: adversary's goal, knowledge, capability, and strategy, as follows.}


\vspace{0.3em}
\noindent \textcolor{black}{\textit{Adversary's Goal.} In general, the adversaries have three main goals, including the \emph{utility}, \emph{effectiveness}, and \emph{stealthiness}. Specifically, the utility requires that the attacked model $f_{\bm \theta}$ achieves high accuracy on benign test samples. Otherwise, the model would not be adopted and therefore no backdoor could be implanted; The effectiveness desires that the attacked model can achieve high attack success rates whenever trigger patterns appear; The stealthiness requires that the dataset modification should be unnoticeable to victim dataset users. For example, the trigger patterns should be invisible and sparse, while the poisoning rate should be small.} 

\vspace{0.3em}
\noindent \textcolor{black}{\textit{Adversary's Knowledge.} We assume the adversary has access to (a few) training data but neither the learning algorithm nor the objective function during the training. Our settings align with previous works on backdoor attacks \cite{gu2019badnets, barni2019new, li2021invisible} and resemble the limited-knowledge gray-box attacks proposed in \cite{biggio2018wild}.}

\vspace{0.3em}
\noindent \textcolor{black}{\textit{Adversary's Capability.} The adversary is only allowed to modify a small subset $\mathcal{D}_s$ of the original training set $\mathcal{D}$ ($i.e.$, $\mathcal{D}_s\subset \mathcal{D}$ and $|\mathcal{D}_s|\ll |\mathcal{D}|$) by attaching the trigger $\bm t$ and relabelling them as the target class $y_T$. The victim model $f_{\bm \theta}$ is trained on the modified dataset $\mathcal{D}^{\prime}$, which is composed of a benign dataset $\mathcal{D}_c=\mathcal{D}\setminus \mathcal{D}_s$ and a poisoned dataset $\mathcal{D}_p=\left\{({\bm x}+{\bm t}, y_T)|({\bm x},y)\in \mathcal{D}_s\right\}$. In particular, the ratio $|\mathcal{D}_p|/|\mathcal{D}|$ is called as the \emph{poisoning rate}.}

\vspace{0.3em}
\noindent \textcolor{black}{\textit{Attack Strategy.} We formulate the proposed attack as a bi-level optimization. Its details are in the next subsection.}


\vspace{0.3em}
\noindent \textcolor{black}{{\bf Characterization of Sparsity and Invisibility.} Given a $d$-dimensional vector ${\bm x}$, $L_0$-norm is defined as $\Vert{\bm x}\Vert_0=\sum_{i=1}^d \mathbb{I}({\bm x}_i\neq 0)$ where $\mathbb{I}(\cdot)$ is the indicator function. In general, $L_0$-norm reflects the maximum number of pixels allowed for modification and is widely used to measure sparsity in recent works \cite{li2019compressing,fan2020sparse}. $L_\infty$-norm is defined as $\Vert{\bm x}\Vert_\infty=\max_{1\leq i\leq d} \vert {\bm x}_i \vert$, which represents the maximum absolute value of the elements. It is widely adopted to reflect invisibility, especially in adversarial attacks \cite{xie2017adversarial,dong2019efficient}.}


\subsection{Sparse and Invisible Backdoor Attack (SIBA)}
\label{sec:SIBA_method}
As we mentioned in the previous section, we need to optimize the trigger sparsity and visibility simultaneously to ensure better stealthiness. In this section, we introduce the formulation and optimization of our sparse and invisible backdoor attack.

\vspace{0.3em}
\noindent {\bf{Problem Formulation.}} The objective of SIBA could be formulated as a bi-level optimization problem since the effectiveness of the trigger pattern is related to a trained model whose optimization is also influenced by poisoned samples, as follows:
\begin{equation}
\begin{aligned}
    \min_{{\bm t}} \sum_{({\bm x},y)\in \mathcal{D}_v}& \mathcal{L}(f_{\bm w}({\bm x}+{\bm t}), y_T) \\
    s.t. \ {\bm w}=\arg\min_{\bm \theta} &\sum_{({\bm x},y)\in \mathcal{D}_c\bigcup\mathcal{D}_p}\mathcal{L}(f_{\bm \theta}({\bm x}), y), \\ \underbrace{\lVert \bm t \rVert_0 \leq k}_{\textbf{sparsity}}&, \ \underbrace{\lVert \bm t \rVert_\infty \leq \epsilon}_{\textbf{invisibility}},
\end{aligned}
\label{eq:siba_form}
\end{equation} 
where $\mathcal{D}_v$ denotes the validation set acquired by the adversary. The upper-level optimization aims to ensure the effectiveness of the trigger, that is, the trained model $f_{\bm w}$ would classify the samples attached with the triggers as the target class. The lower-level optimization represents the training process of the victim model. Besides, we add $L_0$ and $L_\infty$ constraints to confirm the trigger's sparsity and invisibility.

\begin{figure*}
    \centering
    \vspace{-1em}
    \includegraphics[width=1.0\linewidth]{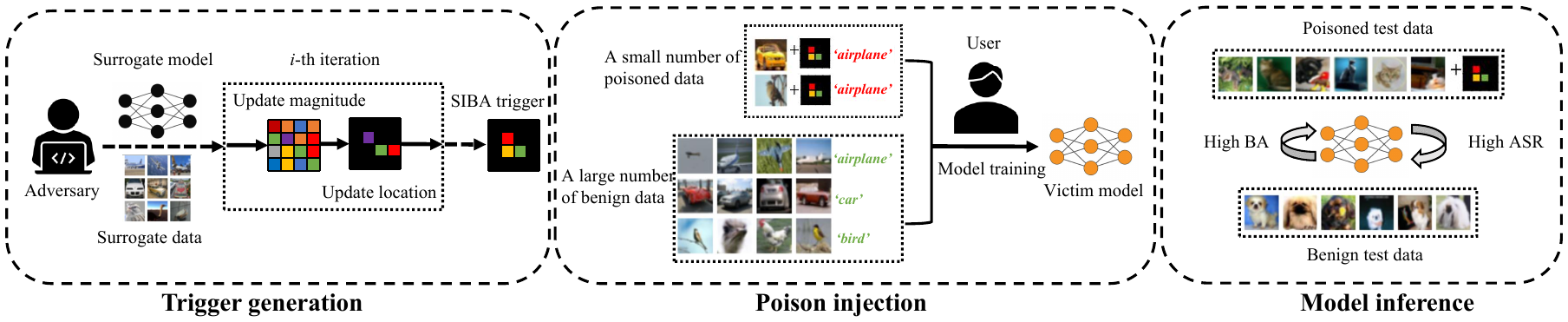}
    \caption{The main pipeline of our sparse and invisible backdoor attack (SIBA). In the first step, the adversaries will generate a sparse and invisible trigger pattern. In the second step, the victim users will train their model on both benign and poisoned samples released by the adversary. In the third step, the attacked model can correctly predict clean test samples whereas the adversaries can maliciously change its predictions to the target class (`airplane' in this example). }
    \label{fig:siba_pipe}
\end{figure*}

\begin{algorithm}[!t] 
	\caption{The optimization process of our SIBA.} 
	\label{alg:siba} 
	\begin{algorithmic}[1] 
		\REQUIRE ~~\\ 
		Batch size $M$, step size $\alpha$, number of training iterations $T$, update step $K$, pre-trained model $f_{\bm b}$, the number of maximum perturbed pixels $k$, $L_\infty$ constraint $\epsilon$, target label $y_T$, mask vector $\bm m$, validation set $\mathcal{D}_v$. \\
		Initialize ${\bm t}\leftarrow {\bm 0}, j\leftarrow 0, i\leftarrow 0, {\bm m}\leftarrow {\bm 0}$. $\mathcal{B}_{\epsilon}=\left\{ {\bm t} \ \lvert \  \lVert \bm t \rVert_\infty \leq \epsilon\right\}$.\\
        \WHILE{$j<T$}
        \STATE Sample mini-batch of $M$ samples $\left\{{\bm x}_i, y_i\right\}_{i=1}^M$ from $\mathcal{D}_v$
        \IF{$i\bmod K=0$}
        \STATE Select the top $k$ large dimension $d_1, \cdots, d_k$ from $\lvert\nabla_{\bm t} \sum_{i=1}^M \mathcal{L}(f_{\bm b}({\bm x}_i+{\bm t}), y_T)\rvert$
        \STATE ${\bm m}_{d_1,\cdots,d_k}\leftarrow 1$
        \STATE ${\bm m}_{[d]\backslash d_1,\cdots,d_k}\leftarrow 0$
        \ENDIF
        \STATE ${\bm t}\leftarrow \Pi_{\mathcal{B}_{\epsilon}}\left({\bm t}-\alpha\cdot\epsilon\cdot\text{sign}(\nabla_{\bm t} \sum_{i=1}^M \mathcal{L}(f_{\bm b}({\bm x}_i+{\bm t}), y_T))\right)$
        \STATE ${\bm t}\leftarrow {\bm m}\cdot {\bm t}$
        \STATE $j\leftarrow j+1$
        \STATE $i\leftarrow i+1$
        \ENDWHILE
		\RETURN The optimized trigger $\bm t$.
	\end{algorithmic}
\end{algorithm}

\vspace{0.3em}
\noindent {\bf{Surrogate Optimization Problem.}} Due to the high non-convexity of the optimization Problem \ref{eq:siba_form}, we need to seek a feasible solution. In particular, the optimization of ${\bm \theta}$ in the lower-level optimization requires full model training, which is time-consuming. To alleviate the computational burden and optimization difficulties, we exploit a pre-trained benign model $f_{\bm b}$ to replace $f_{\bm w}$ in the upper-level optimization. Instead of solving the Problem \ref{eq:siba_form} directly, we turn it into the following surrogate optimization problem: 
\begin{equation}
\begin{aligned}
    \min_{\bm t} \sum_{({\bm x},y)\in \mathcal{D}_v}& \mathcal{L}(f_{\bm b}({ \bm x}+{\bm t}), y_T) \\
    s.t. \ \underbrace{\lVert \bm t \rVert_0 \leq k}_{\textbf{sparsity}}&, \ \underbrace{\lVert \bm t \rVert_\infty \leq \epsilon}_{\textbf{invisibility}}.
\end{aligned}
\label{eq:siba_approx}
\end{equation} 
In this way, we only need to train the surrogate model $f_{\bm b}$ once and avoid frequent updates. We will demonstrate the feasibility and rationality of the surrogate optimization by showing the attack transferability in Section \ref{sec:exp}.

\vspace{0.3em}
\noindent {\bf{Practical Optimization.}} Let $h({\bm t})=\sum_{({\bm x},y)\in \mathcal{D}_v} \mathcal{L}(f_{\bm b}({\bm x}+{\bm t}), y_T)$, we investigate the update of $h({\bm t})$ under the $L_\infty$ and $L_0$ constraints sequentially. Firstly, to satisfy the $L_\infty$ constraint in Problem \ref{eq:siba_approx}, we utilize the projected gradient method, which has been extensively explored in adversarial training \cite{madry2017towards}. The $i$-th update formula is shown as follows:
\begin{equation}
    {\bm v}_{i} = \Pi_{\mathcal{B}_{\epsilon}}\left({\bm t}_i - \alpha\cdot\epsilon\cdot \text{sign}(\nabla_{\bm t} h({\bm t}_i))\right),
\label{eq:fgsm}
\end{equation}
where $\mathcal{B}_\epsilon=\left\{ {\bm t} \ \lvert \  \lVert \bm t \rVert_\infty \leq \epsilon\right\}$, $\alpha$ is the step size. Next, we attempt to project ${\bm v}_{i}$ into the $L_0$ box: $\mathcal{B}=\left\{{\bm t}\ \lvert \ \lVert {\bm t} \rVert_0\leq k, \ {\bm t}_{j} = {\bm v}_{i, j} \ \text{or} \ 0, \ j=1,2\cdots,d\right\}$, which means we must select at most $k$ element of ${\bm v}_{i}$ and set the other elements to zero. We denote ${\bm s}_i={\bm t}_i - \alpha\cdot\nabla_{\bm t} h({\bm t}_i)$ and require the projected ${\bm t}_{i+1}$ as close to ${\bm s}_i$ as possible in the terms of square loss, as follows: 
\begin{equation}
\label{eq:square}
    {\bm t}_{i+1} = \arg\min_{{\bm u}\in \mathcal{B}} \ \lVert {\bm s}_i- {\bm u}  \rVert_2^2.  
\end{equation}

To solve Problem \ref{eq:square}, we have the following Lemma.
\begin{lemma}
\label{lem}
Assuming $\alpha=0$ in Equation \ref{eq:fgsm} and the initial value of ${\bm t}_i$ is 0, Problem \ref{eq:square} has the analytical solution as follows:
\begin{equation}
    {\bm t}_{i+1, j} = \left\{
\begin{aligned}
 &{\bm v}_{i, j} &\text{if} \ j \in C^{\prime}\\
&0  &\text{if} \  j \notin C^{\prime}
\end{aligned},
\right.
\label{eq:l0_proj}
\end{equation}
where $C^{\prime}$ represents the subscript group which has the largest $k$ element of $\lvert \nabla_{\bm t} h({\bm t}_i) \rvert$.
\end{lemma}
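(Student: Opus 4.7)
The plan is to decompose Problem~\ref{eq:square} coordinate-wise and reduce it to a greedy top-$k$ selection. Since $\mathcal{B}$ forces each coordinate ${\bm u}_j$ to equal either $0$ or ${\bm v}_{i,j}$, subject to the cardinality bound $\lVert {\bm u}\rVert_0 \le k$, and the squared loss separates as $\lVert {\bm s}_i - {\bm u}\rVert_2^2 = \sum_{j=1}^{d} ({\bm s}_{i,j} - {\bm u}_j)^2$, the problem amounts to picking a subset $S\subseteq [d]$ of size at most $k$ minimising the total cost. Activating coordinate $j$ (setting ${\bm u}_j = {\bm v}_{i,j}$) has local cost $({\bm s}_{i,j}-{\bm v}_{i,j})^2$ while deactivating has local cost ${\bm s}_{i,j}^2$, so the ``gain'' of activating is
\[
g_j \;=\; {\bm s}_{i,j}^2 - ({\bm s}_{i,j} - {\bm v}_{i,j})^2 \;=\; 2\,{\bm s}_{i,j}\,{\bm v}_{i,j} - {\bm v}_{i,j}^2,
\]
and the optimal $S$ is exactly the set of indices with the $k$ largest values of $g_j$.

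Next I would evaluate $g_j$ under the stated assumptions. Taking ${\bm t}_i = {\bm 0}$ and a step size small enough that the pre-projection point in Equation~\ref{eq:fgsm} already lies in $\mathcal{B}_\epsilon$ (so the $L_\infty$ projection acts as the identity), Equation~\ref{eq:fgsm} gives ${\bm v}_{i,j} = -\alpha\,\epsilon\,\operatorname{sign}(\nabla_{{\bm t}_j} h({\bm 0}))$, and the defining equation of ${\bm s}_i$ gives ${\bm s}_{i,j} = -\alpha\,\nabla_{{\bm t}_j} h({\bm 0})$. Substituting,
\[
g_j \;=\; 2\alpha^2\epsilon\,\lvert \nabla_{{\bm t}_j} h({\bm 0}) \rvert - \alpha^2\epsilon^2,
\]
which is a strictly increasing affine function of $\lvert \nabla_{{\bm t}_j} h({\bm 0}) \rvert$. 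Hence the top-$k$ ranking by $g_j$ coincides with the top-$k$ ranking by $\lvert \nabla_{\bm t} h({\bm 0}) \rvert$, which is precisely the index set $C^{\prime}$ in the lemma, and setting ${\bm t}_{i+1,j} = {\bm v}_{i,j}$ on $C^{\prime}$ and $0$ off $C^{\prime}$ recovers Equation~\ref{eq:l0_proj}.

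The main obstacle is the literal hypothesis ``$\alpha = 0$'', under which ${\bm v}_i$ and ${\bm s}_i$ both collapse to ${\bm 0}$ and the statement becomes vacuous. I interpret it as shorthand for ``$\alpha$ is small enough (equivalently $\alpha \le 1$) that the $L_\infty$ projection is inactive at the first iteration,'' which is exactly what the clean closed-form of ${\bm v}_i$ used above requires. With this reading, the remaining details are routine: ties in the top-$k$ ranking may be broken arbitrarily since tied orderings attain the same objective, and the sign alignment ${\bm s}_{i,j}\,{\bm v}_{i,j} = \alpha^2\epsilon\,\lvert \nabla_{{\bm t}_j} h({\bm 0})\rvert \ge 0$ ensures that the constant offset $-\alpha^2\epsilon^2$ in $g_j$ only affects whether activation helps in absolute terms, not the relative ranking that determines $C^{\prime}$.
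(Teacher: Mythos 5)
Your argument is correct and essentially the same as the paper's own proof: both decompose the separable squared loss over selected versus unselected coordinates with ${\bm t}_i={\bm 0}$, observe that the objective differs from a constant only by a term monotone in $\sum_{j\in C}\lvert(\nabla_{\bm t} h({\bm t}_i))_j\rvert$, and conclude that the minimizer keeps exactly the top-$k$ coordinates of the gradient magnitude. The only cosmetic difference is your reading of the (indeed vacuous) ``$\alpha=0$'' hypothesis --- you take the $L_\infty$ projection to be inactive so that ${\bm v}_{i,j}=-\alpha\epsilon\,\mathrm{sign}(\nabla_{\bm t} h({\bm t}_i))_j$, whereas the paper's computation implicitly uses the clipped value ${\bm v}_{i,j}=-\epsilon\,\mathrm{sign}(\nabla_{\bm t} h({\bm t}_i))_j$; since only the ranking by $\lvert\nabla_{\bm t} h({\bm t}_i)\rvert$ matters (and both proofs, like the lemma, tacitly fix the support size to exactly $k$), the conclusion is unaffected.
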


\begin{figure*}
    \centering
    \subfloat[Clean]{
     \includegraphics[width=0.1\linewidth]{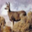}}
    \subfloat[BadNets]{
     \includegraphics[width=0.1\linewidth]{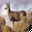}}
    \subfloat[Blended]{
     \includegraphics[width=0.1\linewidth]{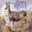}}
    \subfloat[TUAP]{
     \includegraphics[width=0.1\linewidth]{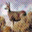}}
    \subfloat[WaNet]{
     \includegraphics[width=0.1\linewidth]{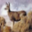}}
    \subfloat[ISSBA]{
     \includegraphics[width=0.1\linewidth]{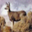}}
    \subfloat[Random]{
     \includegraphics[width=0.1\linewidth]{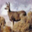}}
    \subfloat[Sparse]{
     \includegraphics[width=0.1\linewidth]{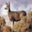}}
    \subfloat[SIBA]{
     \includegraphics[width=0.1\linewidth]{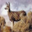}}
    
    \subfloat[Clean]{
     \includegraphics[width=0.1\linewidth]{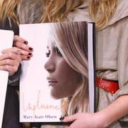}}
    \subfloat[BadNets]{
     \includegraphics[width=0.1\linewidth]{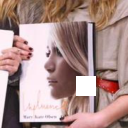}}
    \subfloat[Blended]{
     \includegraphics[width=0.1\linewidth]{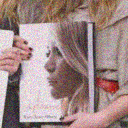}}
    \subfloat[TUAP]{
     \includegraphics[width=0.1\linewidth]{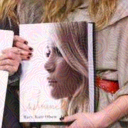}}
    \subfloat[WaNet]{
     \includegraphics[width=0.1\linewidth]{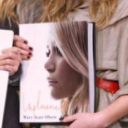}}
    \subfloat[ISSBA]{
     \includegraphics[width=0.1\linewidth]{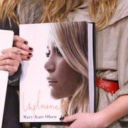}}
    \subfloat[Random]{
     \includegraphics[width=0.1\linewidth]{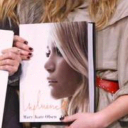}}
    \subfloat[Sparse]{
     \includegraphics[width=0.1\linewidth]{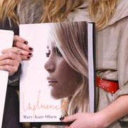}}
    \subfloat[SIBA]{
     \includegraphics[width=0.1\linewidth]{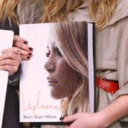}}
    \caption{The examples of poisoned samples with different backdoor attacks on CIFAR-10 and VGGFace2 datasets. \textbf{First Row}: poisoned samples on the CIFAR-10 dataset. \textbf{Second Row}: poisoned samples on the VGGFace2 dataset.}
    \label{fig:vary_cifar_backdoor}
\end{figure*}

The proof of Lemma \ref{lem} is in the appendix. Based on the above analysis, each iteration consists of Step \ref{eq:fgsm} and Step \ref{eq:l0_proj}. We exploit a mask $\bm m\in \left\{0,1\right\}^d$ to perform Step \ref{eq:l0_proj} and update $\bm m$ after multiple iterations of Step \ref{eq:fgsm} to stabilize the optimization. The detailed optimization process is shown in Algorithm \ref{alg:siba}.

\vspace{0.3em}
\noindent {\bf{Poison Injection.}} We attach the optimized trigger ${\bm t}$ to a small portion of training data and relabel them as the target class. The modified training set that consists of the triggered set and the untouched set is released to the users for training the victim models. The training configurations of the victim models are determined by the users while the adversary can not intervene.

\vspace{0.3em}
\noindent {\bf{Model Inference.}} During the inference phase, the adversary validates the effectiveness of the backdoor attack with two types of data: benign test data and poisoned test data. The victim model is expected to predict correct class for the benign test data and the target class for the poisoned test data. The whole pipeline of our SIBA is shown in Figure \ref{fig:siba_pipe}.

\section{Experiments}
\label{sec:exp}

\subsection{Experimental Setup}
\label{sec:main_exp}

\noindent{\bf Datasets.} In this paper, we consider CIFAR-10 and VGGFace2 datasets in our experiments. CIFAR-10 dataset consists of 10 classes and each class includes 5,000 training images and 1,000 test images. The size of each image is $32\times 32\times 3$. For VGGface2 dataset, we construct a 20-class subset from the original set for training efficiency. Each class includes 400 training images and 100 test images. The size of each image is $128\times 128\times 3$. Both datasets are commonly used in recent backdoor-related research \cite{nguyen2021wanet, li2022untargeted, doan2020februus,dumford2020backdooring,wenger2021backdoor}. 

\vspace{0.3em}
\noindent{\bf Baseline Selection.} We compare our SIBA method with six classical and representative backdoor attacks, including \textbf{(1)} BadNets \cite{gu2019badnets}, \textbf{(2)} backdoor attack with the blended strategy (dubbed as `Blended') \cite{chen2017targeted}, \textbf{(3)} TUAP \cite{zhao2020clean}, \textbf{(4)} WaNet \cite{nguyen2021wanet}, \textbf{(5)} ISSBA \cite{li2021invisible}, and \textbf{(6)} UBW-P \cite{li2022untargeted}. Among the aforementioned baselines, the triggers of BadNets and UBW-P are sparse (small $L_0$ constraint) but visible (large $L_\infty$ constraint), while those of others (Blended, TUAP, WaNet, ISSBA) are invisible (small $L_\infty$ constraint) but dense (large $L_0$ constraint). We also provide the results of two straightforward (yet ineffective) sparse and invisible backdoor attacks, including \textbf{(1)} using a random noise as the trigger that is restricted to have the same $L_\infty$ and $L_0$ constraint with our SIBA (dubbed as `Random') and \textbf{(2)} the improved version of Random where its trigger magnitude is optimized using Line 8 in Algorithm \ref{alg:siba}. 


\vspace{0.3em}
\noindent{\bf Attack Settings.} 
For all attacks, we set the poisoning rate as $1\%$ and choose the class `0' as the target class. Specifically, for the settings of BadNets, the trigger is a $3\times 3$ checkerboard on CIFAR-10 and a $20\times 20$ all-white patch on VGGFace2; The trigger is a Hello-Kitty image on CIFAR-10 and a random noise image on VGGFace2 for Blended. The transparency parameter is set as $0.2$; We exploit a pre-trained model to generate the targeted universal adversarial perturbation on the benign model as the trigger for TUAP. The $L_\infty$ constraint is set as $8/255$ on both datasets; For WaNet, the size of the control field is $4\times 4$ and the strength parameter of the backward warping field is set to $0.5$; We adopt the default settings used in its original paper \cite{li2021invisible} for ISSBA; UBW-P is an untargeted backdoor with the BadNets-type trigger pattern; For our SIBA, we set the step size $\alpha=0.2$ and $L_\infty$ constraint $\epsilon=8/255$ on both datasets. $L_0$ constraint $k$ is set to $100$ on CIFAR-10 dataset and $1,600$ on VGGface2 dataset. We set the number of training iterations $T=200$ and the update step $K=5$ on both datasets. We use the whole training set to optimize the SIBA trigger. We implement them based on \texttt{BackdoorBox} \cite{li2023backdoorbox}. The example of poisoned samples is shown in Figure \ref{fig:vary_cifar_backdoor}.  

\vspace{0.3em}
\noindent{\bf Training Settings.} We select ResNet-18 \cite{he2016deep} and VGG-16 \cite{simonyan2014very} as the model structures. For the CIFAR-10 dataset, the victim model is obtained by using SGD optimizer with momentum $0.9$ and weight decay $5\times 10^{-4}$. The number of training epochs is 100 and the initial learning rate is 0.1 which is divided by 10 in the 60-th epoch and the 90-th epoch. For the VGGFace2 dataset, we exploit an ImageNet pre-trained model and train the victim model using the SGD optimizer with momentum $0.9$ and weight decay $1\times 10^{-4}$. The number of training epochs is 30 and the initial learning rate is $0.001$ which is divided by 10 in the 15-th epoch and the 20-th epoch. Classical data augmentations such as random crop and random horizontal flip are used for higher benign accuracy. Note that the pre-trained model $f_{\bm b}$ and the victim model have the same network structure and we will explore the attack effectiveness when they are different in Section \ref{sec:transfer}.

\vspace{0.3em}
\noindent{\bf Evaluation Metrics.} Following the classical settings in existing research, we adopt benign accuracy (BA) and attack success rate (ASR) to evaluate all backdoor attacks. BA is the ratio of correctly classified samples in the benign test set. ASR is the ratio of test samples that are misclassified as the target class when the trigger is attached to them. BA indicates the model utility and ASR reflects the attack effectiveness. We report the comparison of the $L_0$ and $L_\infty$ distances of trigger patterns for indicating stealthiness. \textcolor{black}{Besides, we also exploit LPIPS \cite{zhang2018unreasonable} and the structural similarity index measure (SSIM) for the reference of stealthiness. The higher the BA, ASR, and SSIM, the lower the $L_0$, $L_\infty$, and LPIPS, the better the attack.} 

\begin{table*}[!t]
\centering
\caption{The results of backdoor attacks on CIFAR-10. \textcolor{black}{We mark the background of bad cases in red whose ASR is lower than $90\%$ or $L_0/ L_\infty$ distance is larger than $10\%$ of the maximum possible values. `--' denotes not available.}}
\vspace{-0.3em}
\scalebox{1.0}{
\begin{tabular}{c|c|c|cccccc|ccc}
\toprule
   Model$\downarrow$  &Metric$\downarrow$, Method$\rightarrow$ & No Attack & BadNets & Blended & TUAP & WaNet & ISSBA & UBW-P &  Random & Sparse &SIBA \\
\midrule
 \multirow{6}*{ResNet}& BA (\%) &94.67 &94.41 &94.58 &94.53 &94.29 &94.57 &94.46 &94.13 &94.44 &94.06 \\
 & ASR (\%) &-- & 100 & 98.16 & 85.47 & \cellcolor{magenta!20}47.90 &\cellcolor{magenta!20}0.76 &\cellcolor{magenta!20}64.78 &\cellcolor{magenta!20}2.87 &\cellcolor{magenta!20}88.49 & 97.60 \\
\cmidrule(r){2-2} \cmidrule(r){3-3} \cmidrule(r){4-9} \cmidrule(r){10-12} 
 & $L_0$ &-- & 9 &\cellcolor{magenta!20}1,020 &\cellcolor{magenta!20}1,020 &\cellcolor{magenta!20}1,016 &\cellcolor{magenta!20}1,024 & 9 & 100 & 100 & 100 \\
 & $L_\infty$ &-- &\cellcolor{magenta!20}0.81 &\cellcolor{magenta!20}0.19 & 0.03 &\cellcolor{magenta!20}0.19 & 0.05 &\cellcolor{magenta!20}0.81 & 0.03 & 0.03 & 0.03 \\
 \cmidrule(r){2-2} \cmidrule(r){3-3} \cmidrule(r){4-9} \cmidrule(r){10-12} 
 & LPIPS &-- &$<$ 0.001 &0.028 &0.001 &0.003 & $<$ 0.001 & $<$ 0.001 & $<$ 0.001 & $<$ 0.001 & $<$ 0.001 \\
 & SSIM &-- & 0.974 & 0.774 & 0.919 & 0.973 & 0.989  & 0.974 & 0.995  &  0.992 &  0.993\\
\midrule
 \multirow{6}*{VGG}& BA (\%) &93.34 &93.25 &93.31 &93.06 &92.09 &93.15 &93.16 &92.69 &92.59 &92.84 \\
 & ASR (\%) &-- & 100 & 98.28 &\cellcolor{magenta!20}69.13 &\cellcolor{magenta!20}7.53 &\cellcolor{magenta!20}1.24 &\cellcolor{magenta!20}10.02 &\cellcolor{magenta!20}1.48 &\cellcolor{magenta!20}66.44 & 91.22 \\
 \cmidrule(r){2-2} \cmidrule(r){3-3} \cmidrule(r){4-9} \cmidrule(r){10-12} 
 & $L_0$ &-- & 9 &\cellcolor{magenta!20}1,020 &\cellcolor{magenta!20}1,020 &\cellcolor{magenta!20}1,016 &\cellcolor{magenta!20}1,024 & 9 & 100 & 100 & 100 \\
 & $L_\infty$ &-- &\cellcolor{magenta!20}0.81 &\cellcolor{magenta!20}0.19 & 0.03 &\cellcolor{magenta!20}0.19 & 0.05 &\cellcolor{magenta!20}0.81 & 0.03 & 0.03 & 0.03 \\
 \cmidrule(r){2-2} \cmidrule(r){3-3} \cmidrule(r){4-9} \cmidrule(r){10-12} 
 & LPIPS &-- &$<$ 0.001 &0.028 &0.001 &0.003 & $<$ 0.001 & $<$ 0.001 & $<$ 0.001 & $<$ 0.001 & $<$ 0.001 \\
 & SSIM &-- & 0.974 & 0.774 & 0.919 & 0.973 & 0.989  & 0.974 & 0.995  &  0.992 &  0.993\\
 \bottomrule
\end{tabular}
}
\label{tab:main}
\end{table*}

\begin{table*}[!t]
\centering
\caption{The results of backdoor attacks on VGGFace2. \textcolor{black}{We mark the background of bad cases in red whose ASR is lower than $90\%$ or $L_0/ L_\infty$ distance is larger than $10\%$ of the maximum possible values. `--' denotes not available.}}
\vspace{-0.3em}
\scalebox{1.0}{
\begin{tabular}{c|c|c|cccccc|ccc}
\toprule
   Model $\downarrow$  &Metric $\downarrow$, Method $\rightarrow$ & No Attack & BadNets & Blended & TUAP & WaNet & ISSBA & UBW-P &  Random & Sparse &SIBA \\
\midrule
 \multirow{6}*{ResNet}& BA (\%) &79.75 &78.80 &78.80 &78.75 &79.05 &78.30 &77.85 &77.80 &78.25 &78.85 \\
 & ASR (\%) &-- & 93.68& 99.95 &\cellcolor{magenta!20}85.84 & \cellcolor{magenta!20}3.32 & \cellcolor{magenta!20}1.00 & \cellcolor{magenta!20}38.20 & \cellcolor{magenta!20}1.74 & \cellcolor{magenta!20}27.79 & 96.21 \\
 \cmidrule(r){2-2} \cmidrule(r){3-3} \cmidrule(r){4-9} \cmidrule(r){10-12} 
 & $L_0$ &-- & 400 & \cellcolor{magenta!20}16,384 & \cellcolor{magenta!20}16,332 & \cellcolor{magenta!20}15,892 & \cellcolor{magenta!20}16,382 & 400 & 1,600 & 1,600 & 1,600 \\
 & $L_\infty$ &-- & \cellcolor{magenta!20}0.92 & \cellcolor{magenta!20}0.20 & 0.03 & \cellcolor{magenta!20}0.23 & 0.04 & \cellcolor{magenta!20}0.92 & 0.03 & 0.03 & 0.03 \\
 \cmidrule(r){2-2} \cmidrule(r){3-3} \cmidrule(r){4-9} \cmidrule(r){10-12} 
 & LPIPS &-- &0.08 &0.18 &0.05 &0.02 & $<$ 0.01 &0.08 & $<$ 0.01 & $<$ 0.01 & $<$ 0.01 \\
  & SSIM &-- &  0.965  & 0.538 & 0.870 & 0.975 & 0.974  & 0.965 &  0.989 & 0.984  & 0.987 \\
\midrule
 \multirow{6}*{VGG}& BA (\%) &86.35 &86.15 &86.30 &85.90 &85.75 &85.40 &84.50 &85.90 &85.65 &86.15 \\
 & ASR (\%) &-- & 98.42 & 100 &\cellcolor{magenta!20}83.68 &\cellcolor{magenta!20}2.95 &\cellcolor{magenta!20}87.53 &\cellcolor{magenta!20}48.10 &\cellcolor{magenta!20}4.63 &\cellcolor{magenta!20}41.68 & 96.37 \\
 \cmidrule(r){2-2} \cmidrule(r){3-3} \cmidrule(r){4-9} \cmidrule(r){10-12} 
 & $L_0$ &-- & 400 &\cellcolor{magenta!20}16,384 &\cellcolor{magenta!20}16,332 &\cellcolor{magenta!20}15,892 &\cellcolor{magenta!20}16,382 & 400 & 1,600 & 1,600 & 1,600 \\
 & $L_\infty$ &-- &\cellcolor{magenta!20}0.92 &\cellcolor{magenta!20}0.20 & 0.03 &\cellcolor{magenta!20}0.23 & 0.04 &\cellcolor{magenta!20}0.92 & 0.03 & 0.03 & 0.03 \\
 \cmidrule(r){2-2} \cmidrule(r){3-3} \cmidrule(r){4-9} \cmidrule(r){10-12} 
  & LPIPS &-- &0.08 &0.18 &0.05 &0.02 & $<$ 0.01 &0.08 & $<$ 0.01 & $<$ 0.01 & $<$ 0.01 \\
  & SSIM &-- &  0.965  & 0.538 & 0.870 & 0.975 & 0.974  & 0.965 &  0.989 & 0.984  & 0.987  \\
 \bottomrule
\end{tabular}
}
\label{tab:main_vggface}
\end{table*}

\subsection{Main Results}
\label{sec:main_results}


As shown in Table \ref{tab:main}-\ref{tab:main_vggface}, our SIBA reaches the best performance among all sparse and invisible backdoor attacks ($i.e.$, Random, Sparse, and SIBA) on both datasets. Especially on the VGGFace2 dataset, the ASR improvements are larger than 50\% compared to Sparse and 90\% compared to Random. These results verify the effectiveness of our trigger optimization. Besides, the ASRs of our attack are always higher than 90\% and the BA decreases compared to the model trained without attacks are always less than 2\%. In particular, the attack performance of our SIBA is on par with (BadNets and Blended) or even better than (TUAP, WaNet, ISSBA, and UBW-P) of all baseline attacks that are either visible or not sparse. \textcolor{black}{Notably, SIBA is the only attack that achieves stealthiness across all four metrics ($L_0$, $L_\infty$, LPIPS, and SSIM).} These results verify the effectiveness and stealthiness of our proposed method. 

\textcolor{black}{Besides, we devised a survey requiring 20 participants to choose the most likely modified image within each set containing four clean images and one SIBA-poisoned image, where each survey has five image sets. The overall selection accuracy is $23\%$, which is close to that of a random guess (20\%). This result indicates that human eyes can hardly detect the SIBA trigger, verifying the stealthiness of our attack again.}


\begin{figure*}[!t]
    \centering
    \subfloat[CIFAR-10]{
\includegraphics[width=0.47\linewidth]{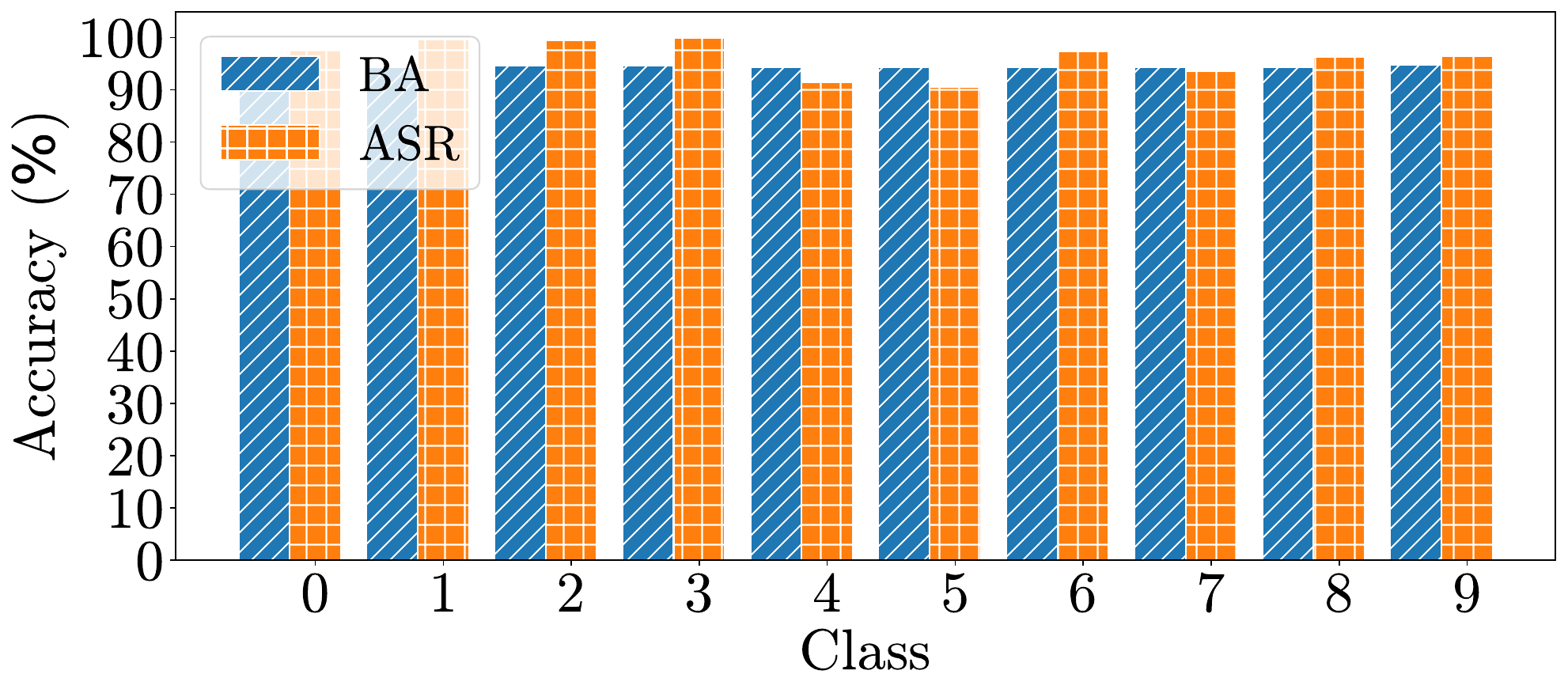}}
\hspace{0.5em}
\subfloat[VGGFace2]{
\includegraphics[width=0.47\linewidth]{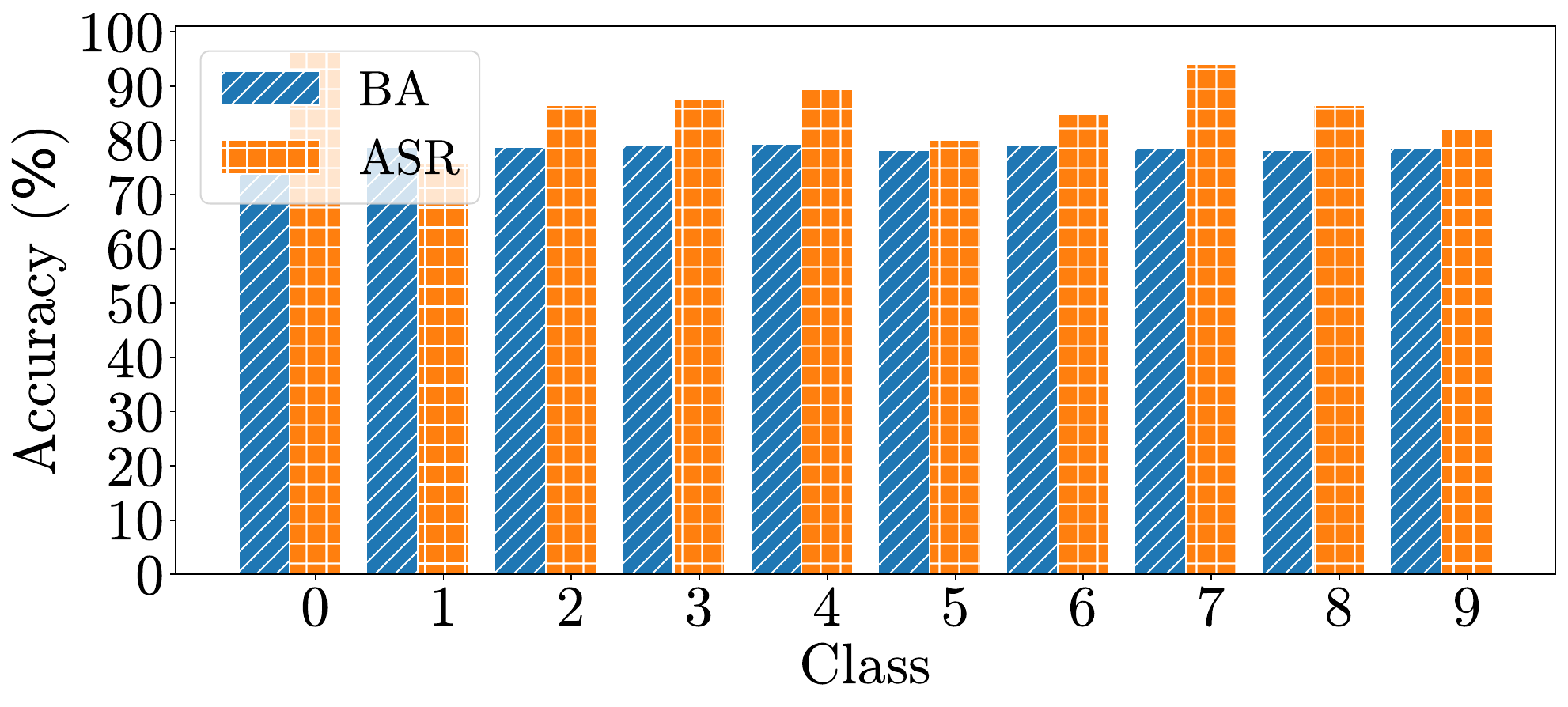}}
    \caption{The Effects of the target class on CIFAR-10 and VGGFace2 datasets.}
    \label{fig:vary_class}
\end{figure*}

\begin{figure}[!t]
    \centering
    \subfloat[CIFAR-10]{
     \includegraphics[width=0.48\linewidth]{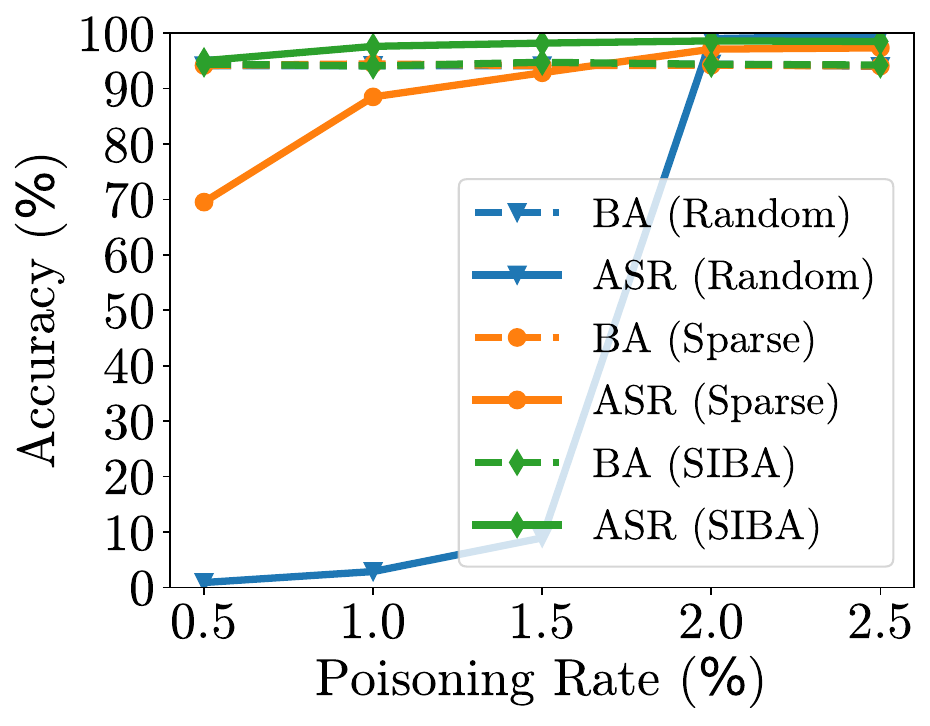}}
    \subfloat[VGGFace2]{
     \includegraphics[width=0.48\linewidth]{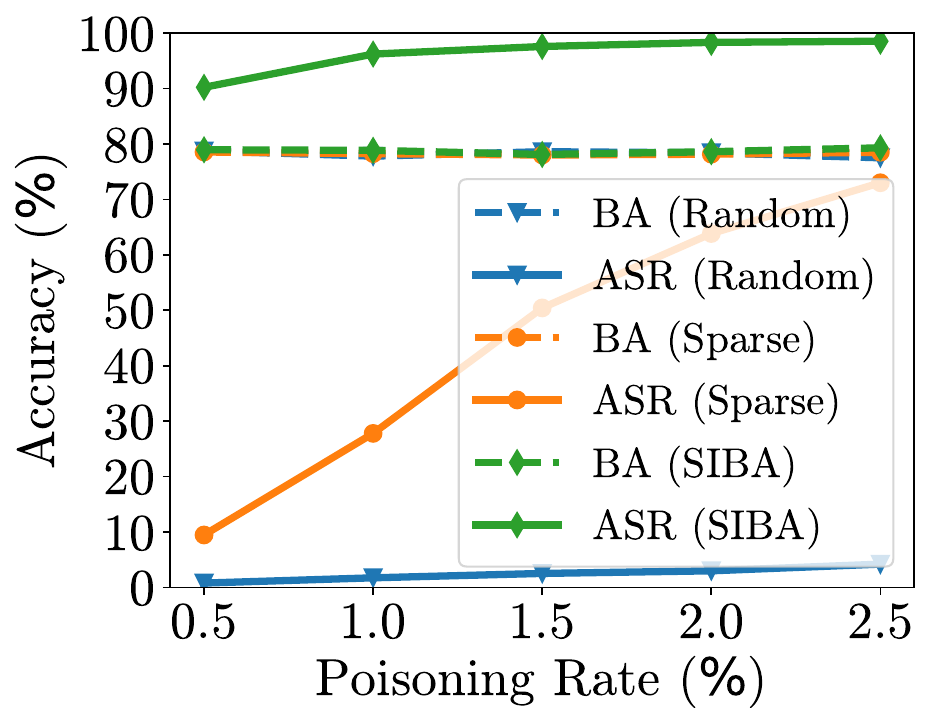}}
    \caption{Results with different poisoning rates on the CIFAR-10 dataset and the VGGFace2 dataset.}
    \label{fig:vary_pr}
    \vspace{-1em}
\end{figure}

\subsection{Ablation Study}

In this section, we discuss the effectiveness of our SIBA with different key hyper-parameters. Unless otherwise specified, all settings are the same as those used in Section \ref{sec:main_results}.

\vspace{0.3em}
\noindent{\bf Effects of the Target Label.} To validate the effectiveness of SIBA with different target labels, we conduct experiment on the ResNet18 with ten different classes. As shown in Figure \ref{fig:vary_class}, we could find that SIBA achieves $>90\%$ ASR for all cases on the CIFAR-10 dataset and $>75\%$ ASR on the VGGface2 dataset, although the performance may have some mild fluctuations.

\vspace{0.3em}
\noindent{\bf Effects of the Poisoning Rate.} To validate the effectiveness of SIBA with different poisoning rates, we experiment on the ResNet18 model with more poisoning rates from $0.5\%$ to $2.5\%$. As shown in Figure \ref{fig:vary_pr}, the attack performance of our SIBA increases with the increase of the poisoning rate. The attack performance of our SIBA is always better than baseline invisible and sparse attacks ($i.e.$, Random and Sparse). In particular, on the CIFAR-10 dataset, SIBA achieves $>90\%$ ASR with only $0.5\%$ poisoning rate while the poisoning rate of the other two baselines has to be set three or four times higher to achieve similar attack performance. The advantage of our SIBA is even more obvious on the VGGFace2 dataset.

\begin{figure}[!t]
    \centering
    \subfloat[CIFAR-10]{
     \includegraphics[width=0.48\linewidth]{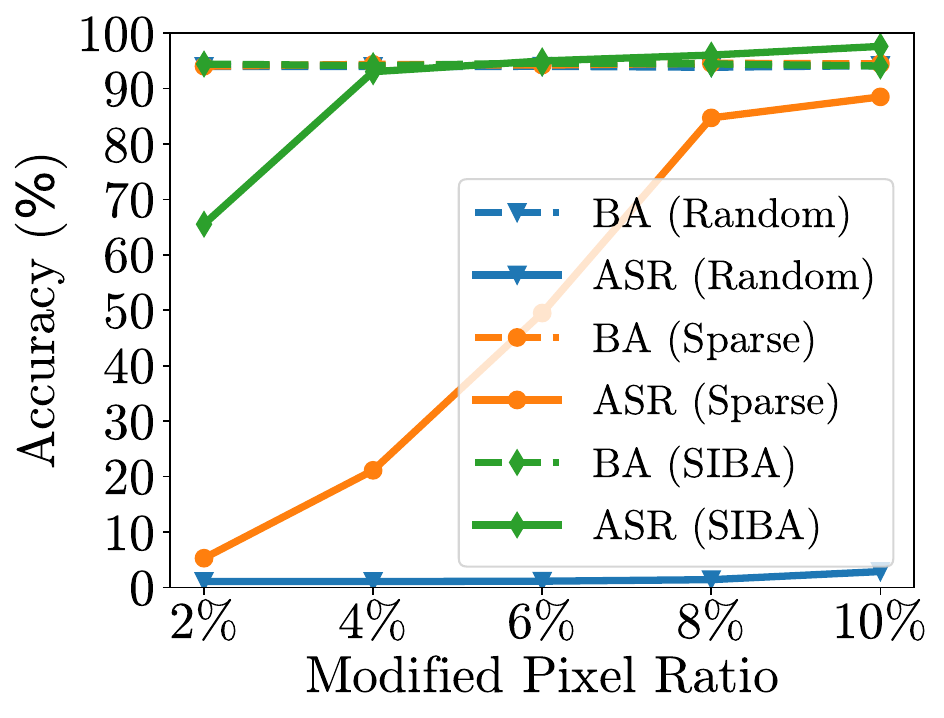}}
    \subfloat[VGGFace2]{
     \includegraphics[width=0.48\linewidth]{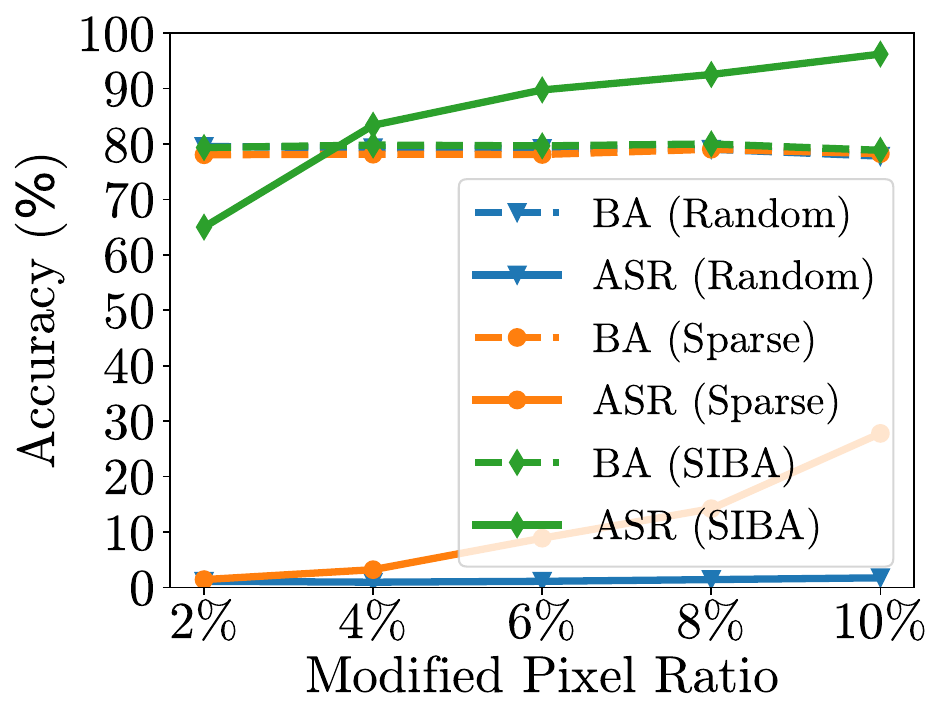}}
    \caption{\textcolor{black}{The effects of $L_0$ constraint on the CIFAR-10 dataset and the VGGFace2 dataset.}}
    \label{fig:vary_k}
    \vspace{-1em}
\end{figure}

\vspace{0.3em}
\noindent{\bf Effects of the $L_0$ Constraints.} To investigate the attack performance of our SIBA under various $L_0$ constraints, we experiment on the ResNet18 model with different $k$ values ranging from $50$ to $250$ on CIFAR-10 and from $800$ to $2400$ on VGGFace2, respectively. As shown in Figure \ref{fig:vary_k}, the attack effectiveness increases with the increase of $k$ while having mild effects on the benign accuracy. In particular, our SIBA achieves $>90\%$ ASR with only 50 perturbed pixels (about $5\%$ sparsity) on the CIFAR-10 dataset. However, for the other baseline attacks, the number of maximum perturbed pixels has to be increased to 150 for `Sparse' and 200 for `Random' to reach a similar performance. The improvement of our SIBA is even larger on the VGGFace2 dataset. 

\vspace{0.3em}
\noindent{\bf Effects of the $L_\infty$ Constraint.} To investigate the attack performance of our SIBA under various $L_\infty$ constraints, we experiment on the ResNet18 model with different $\epsilon$ values, ranging from $4/255$ to $20/255$. As shown in Figure \ref{fig:vary_epsilon}, similar to the effects of $k$, the attack effectiveness increases with the increase of $\epsilon$ while having mild effects on the benign accuracy. Our SIBA can achieve $>90$ ASR under $4/255$ budget on the CIFAR-10 dataset. In contrast, to achieve a similar attack performance, the $L_\infty$ constraints of both baseline attacks have to be increased to three or four times larger than that of the SIBA. The results on VGGFace2 also demonstrate the superiority of SIBA over these baseline methods.

\vspace{0.3em}
\noindent{\bf Effects of Other Parameters.} To demonstrate the stability of our attack under other parameters, we experiment on CIFAR-10 dataset with ResNet18 model and various $K$, $\alpha$, and $T$ values. Specifically, $K$ ranges from $5$ to $20$; $\alpha$ ranges from $0.2$ to $1.0$; $T$ ranges from $200$ to $1,000$. As shown in Figure \ref{fig:hyp_sen}, the ASR of SIBA is always higher than $95\%$ in all cases. These results indicate that we can easily obtain a good performance without fine-tuning these parameters in practice.

\begin{figure}[!t]
    \centering
    \subfloat[CIFAR-10]{
     \includegraphics[width=0.48\linewidth]{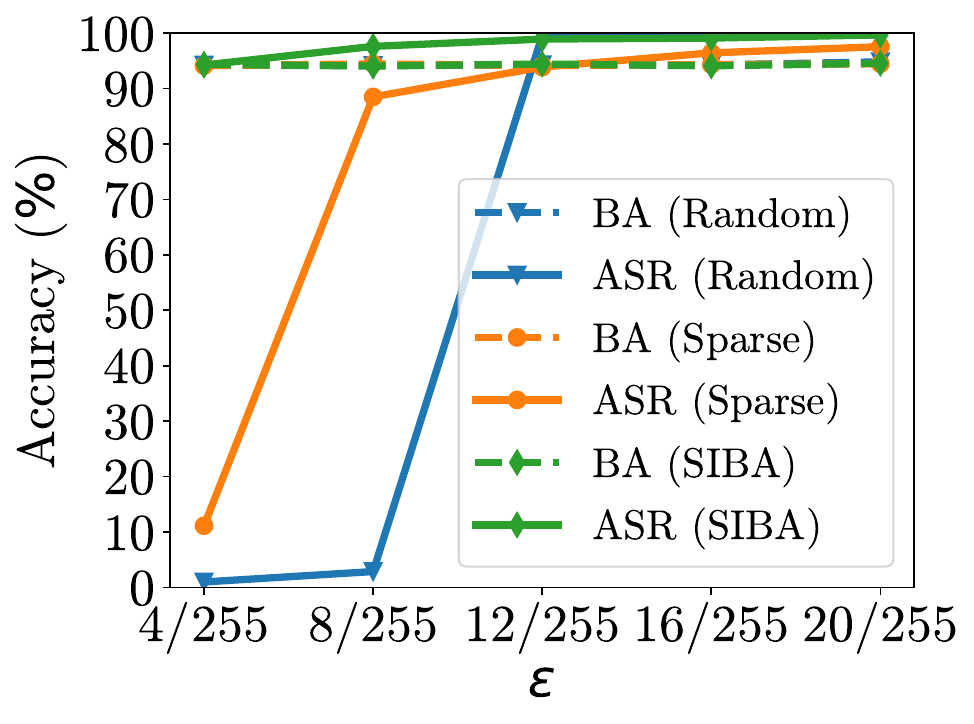}}
    \subfloat[VGGFace2]{
     \includegraphics[width=0.48\linewidth]{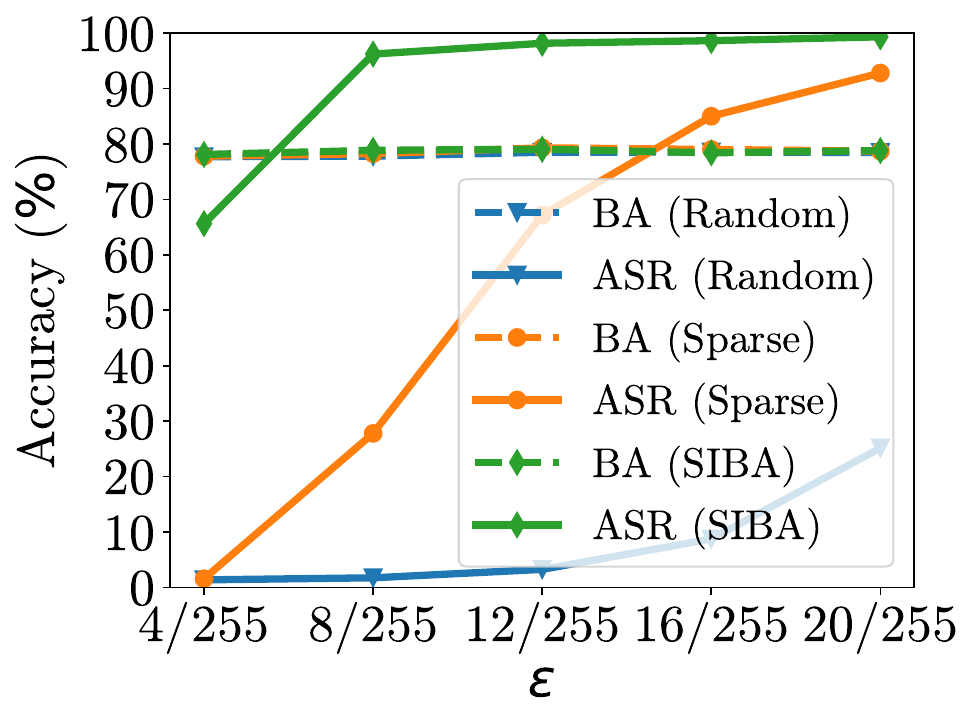}}
    \caption{The effects of $L_\infty$ constraint on the CIFAR-10 dataset and the VGGFace2 dataset.}
    \label{fig:vary_epsilon}
\end{figure}

\subsection{The Resistance to Potential Backdoor Defenses}


\vspace{0.3em}
\noindent{\bf The Resistance to STRIP.} As a representative black-box detection of poisoned training samples with predicted logits, STRIP \cite{gao2021design} perturbs a given test image by superimposing various images and then inspects the entropy of the model prediction. The suspicious samples having low entropy are regarded as poisoned samples. We evaluate the resistance of our SIBA to STRIP by visualizing the entropy distributions of samples. As shown in Figure \ref{fig:strip}, the entropy distributions of poisoned samples are mixed with those of benign samples. Accordingly, our SIBA can evade the detection of STRIP.

\vspace{0.3em}
\noindent{\bf The Resistance to Anti-backdoor Learning (ABL).} As a representative poison suppression method, ABL \cite{li2021anti} first identifies the poisoned sample candidates with loss values and then unlearns the candidate samples by gradient ascent. In our experiments, the isolation epoch and the unlearning epoch are set to 20 and 80, as suggested in its original paper \cite{li2021anti}. As shown in Table \ref{tab:abl}, our attack is resistant to ABL in most cases, although the ASR may have some decreases. Its failure is mostly because the loss values are not effective to reflect the difference between poisoned and benign samples of SIBA.  


\begin{table}[!t]
\centering
\caption{The resistance to anti-backdoor learning (ABL).}
\vspace{-0.5em}
\begin{tabular}{c|c|cc}
\toprule 
Dataset$\downarrow$ & Model$\downarrow$, Metric$\rightarrow$
& BA (\%) & ASR (\%)  \\
\hline 
\multirow{2}*{CIFAR-10}& ResNet & 88.33 & 22.71  \\
 &VGG & 82.41 & 94.00 \\
\cmidrule{1-4}
\multirow{2}*{VGGFace2}& ResNet & 72.45 & 74.74 \\
 &VGG & 77.20 & 96.53 \\
\bottomrule 
\end{tabular}
\label{tab:abl}
\end{table}

\begin{figure*}
    \centering
    \subfloat[Sensitivity of $K$.]{
     \includegraphics[width=0.3\linewidth]{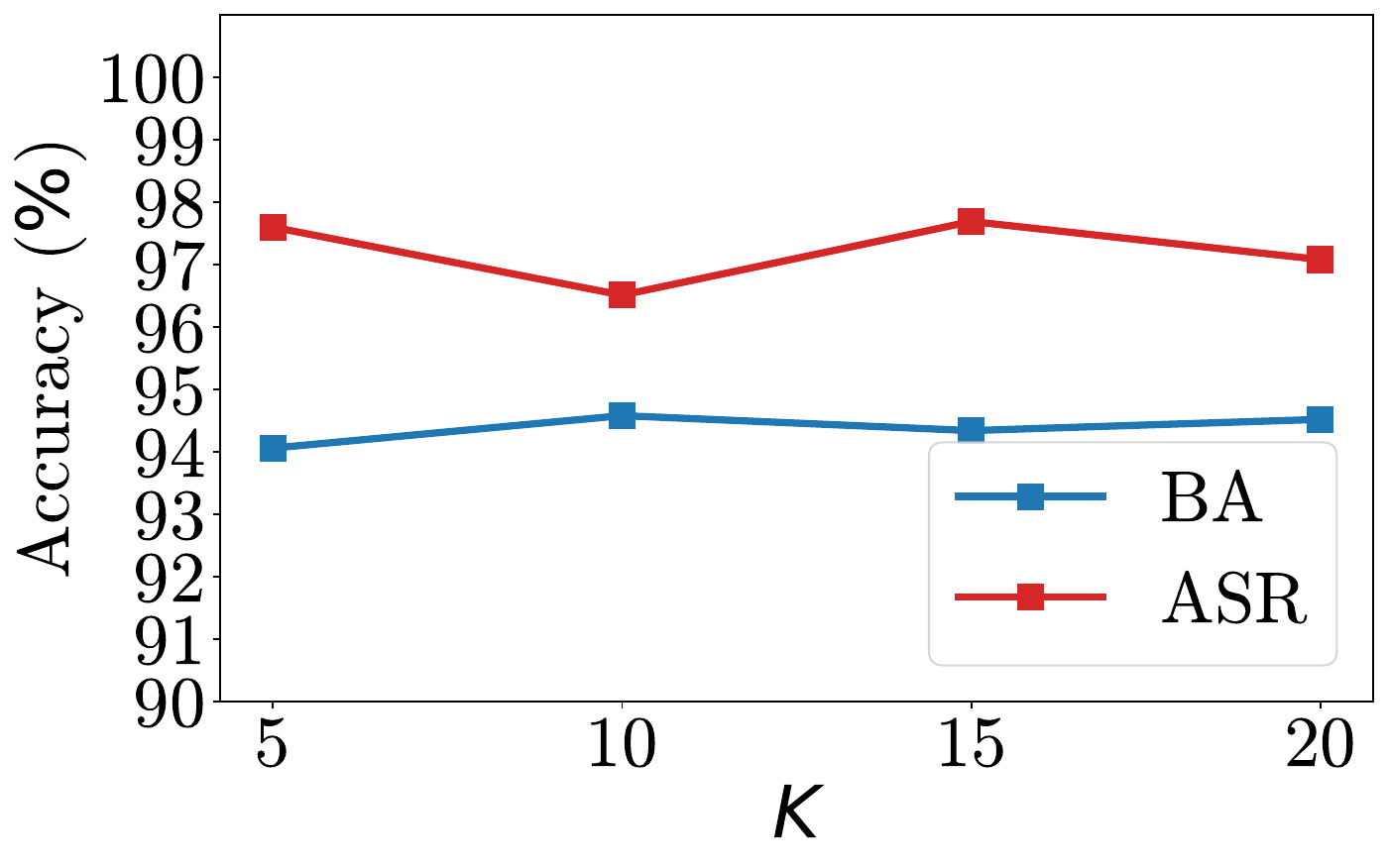}}
     \hspace{0.8em}\subfloat[Sensitivity of $\alpha$.]{\includegraphics[width=0.3\linewidth]{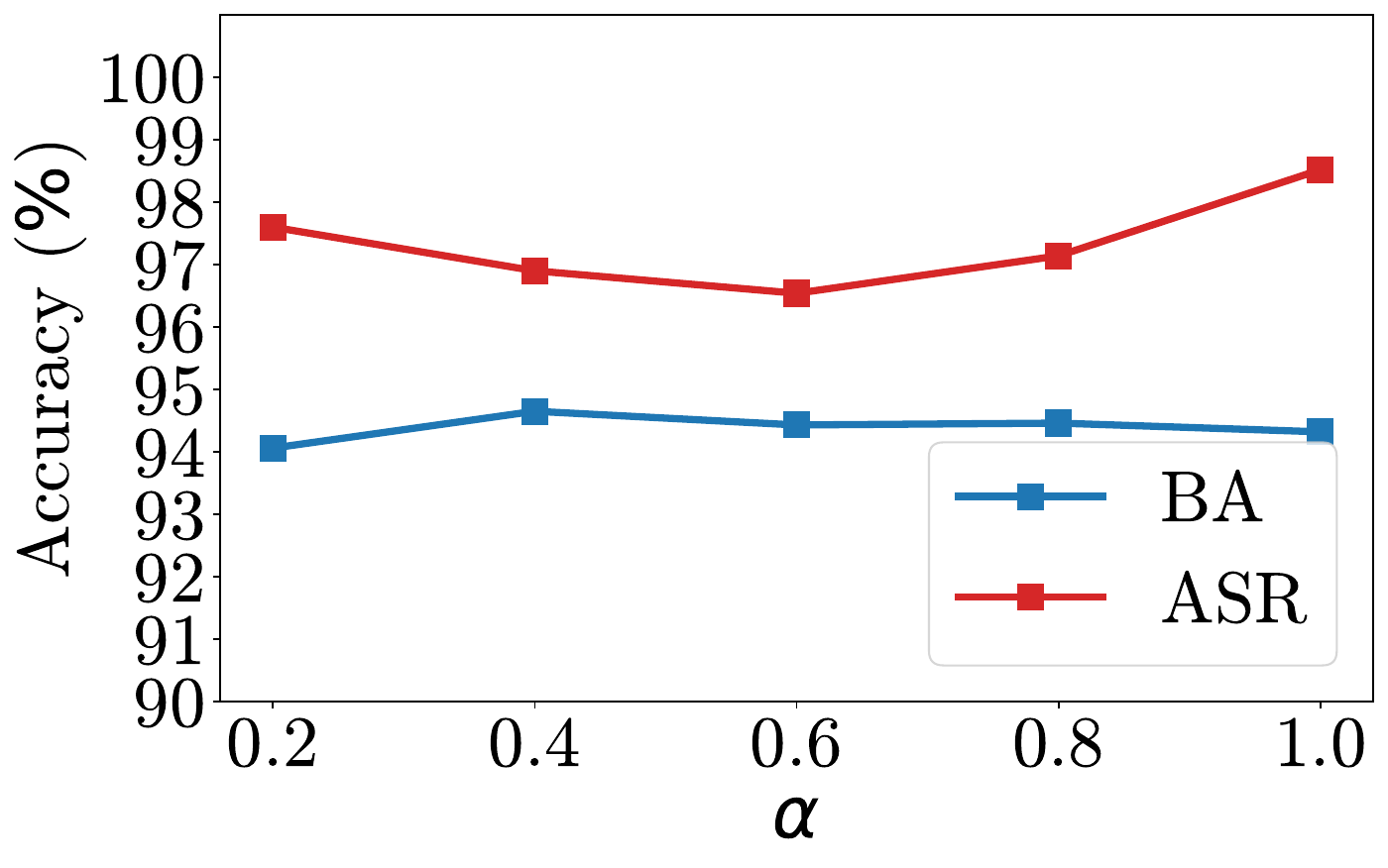}}
    \hspace{0.8em}\subfloat[Sensitivity of $T$.]{
     \includegraphics[width=0.3\linewidth]{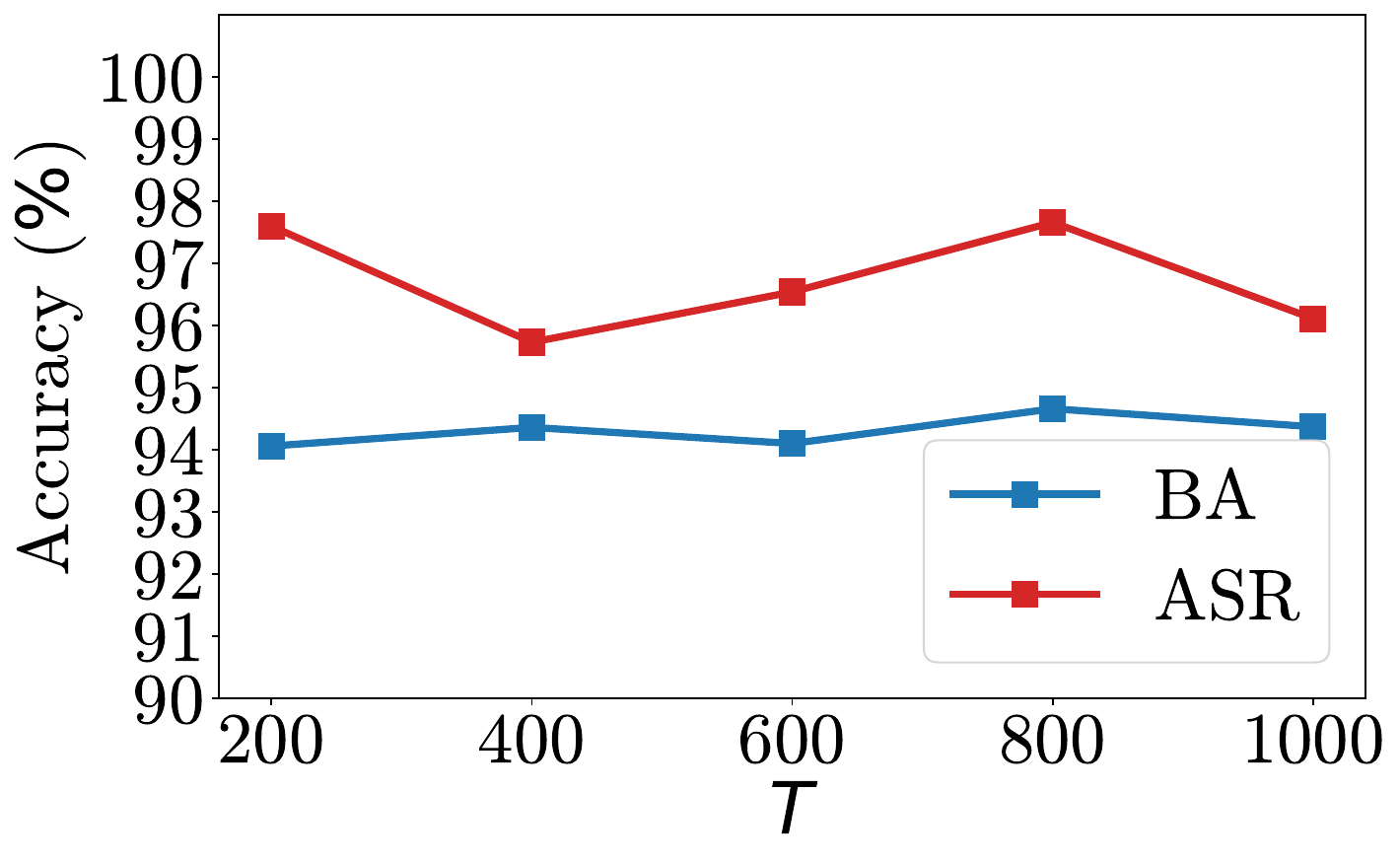}}
    \caption{Results of our SIBA with different parameters on the CIFAR-10 dataset and the VGGFace2 dataset.}
    \label{fig:hyp_sen}
    \vspace{-0.5em}
\end{figure*}

\vspace{0.3em}
\noindent {\bf The Resistance to Fine-pruning (FP).} As a representative backdoor removal method, fine-pruning (FP) \cite{liu2018fine} first tests the candidate model with a small clean validation set and records the average activation of each neuron. Then, FP prunes the channels with increasing order until the clean accuracy drops below some threshold. In our experiments, the validation set is obtained by randomly choosing $20\%$ samples from the clean training dataset and the total channel number is 512. The curves of BA and ASR with respect to the number of pruned channels are shown in Figure \ref{fig:fp}. We could observe that the ASR of the proposed backdoor attack preserves on CIFAR-10 even if a large portion of channels are pruned. As for the VGGFace2 dataset, the ASR is reduced below $80\%$ when the number of pruned channels is larger than $400$. However, the BA is significantly decreased as its sacrifice. These results verify the resistance of our SIBA to FP.


\begin{figure*}
    \centering
    \includegraphics[width=0.9\linewidth]{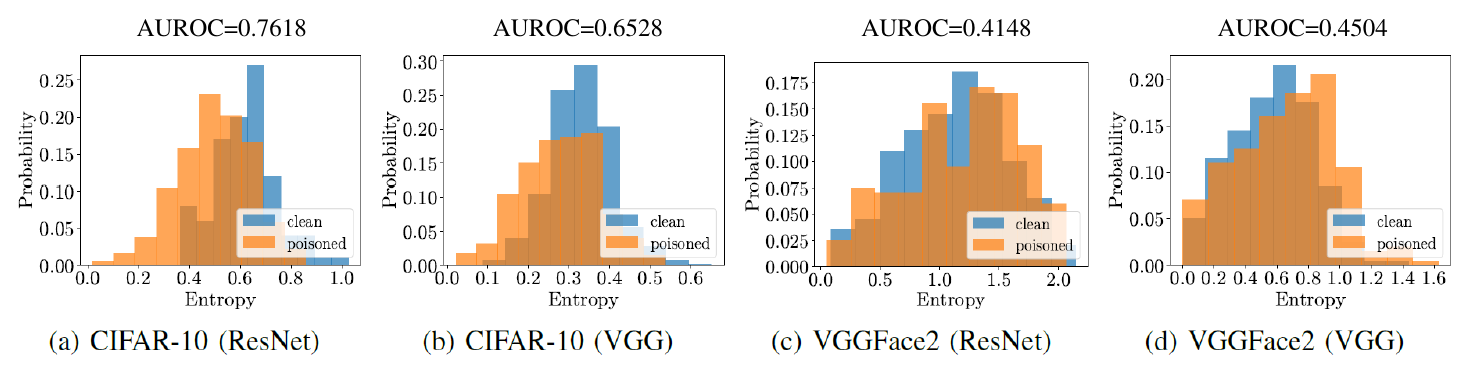}
    \caption{\textcolor{black}{The resistance of our SIBA to STRIP.} }
    \label{fig:strip}
    \vspace{-1.5em}
\end{figure*}

\begin{figure*}[!t]
    \centering
    \subfloat[CIFAR-10 (ResNet)]{
     \includegraphics[width=0.22\linewidth]{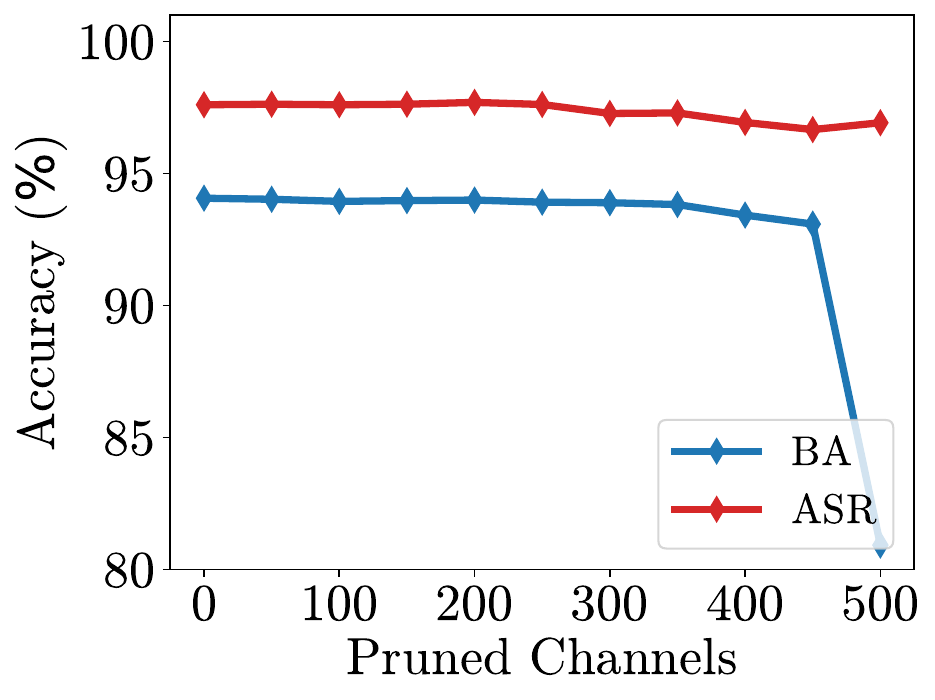}}
    \subfloat[CIFAR-10 (VGG)]{
     \includegraphics[width=0.22\linewidth]{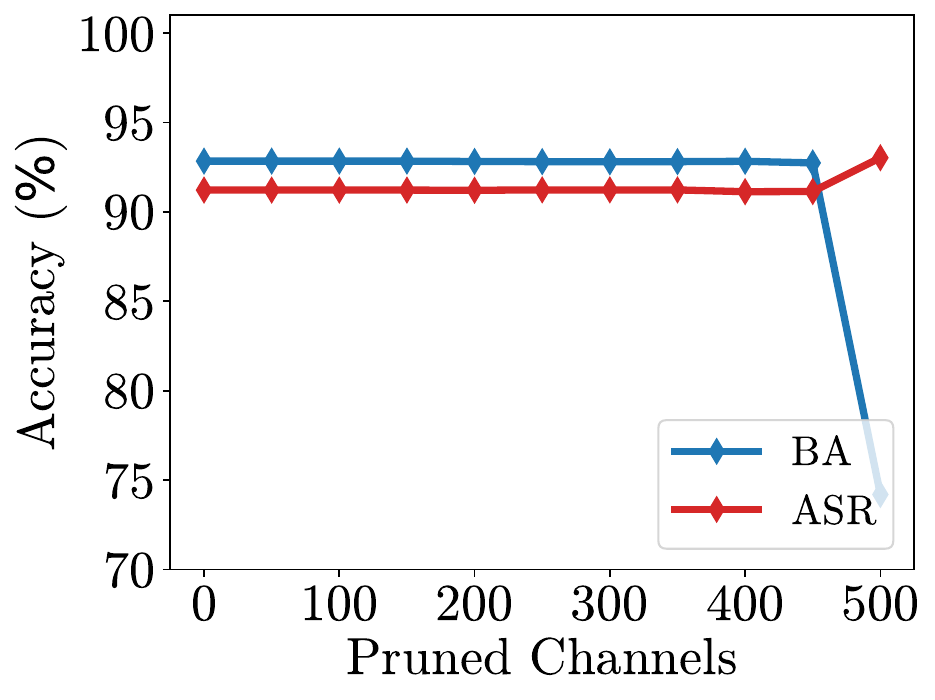}}
    \subfloat[VGGFace2 (ResNet)]{
     \includegraphics[width=0.22\linewidth]{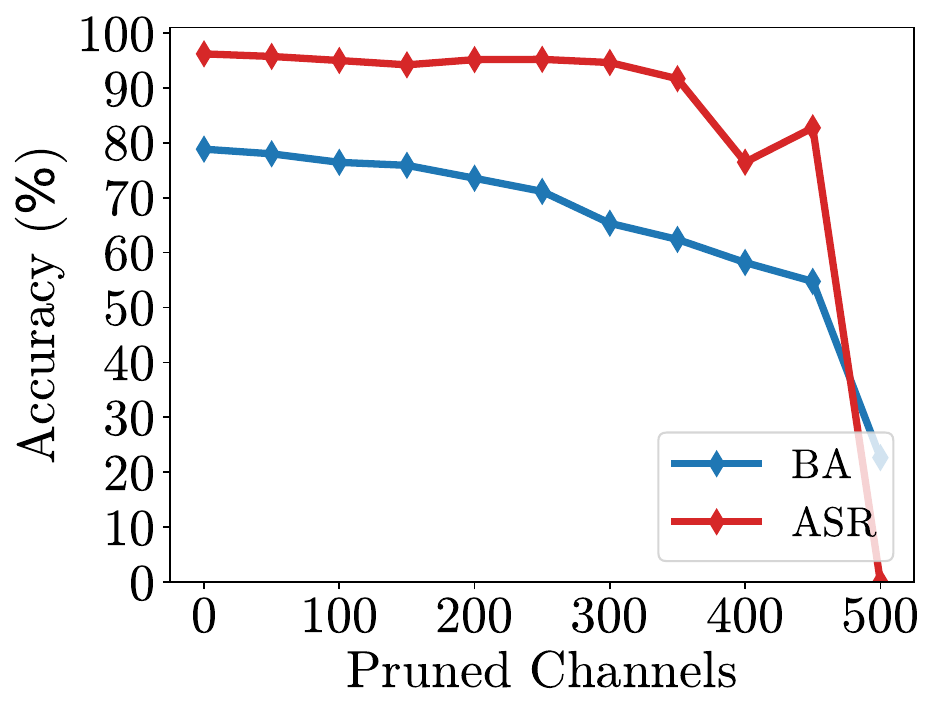}}
    \subfloat[VGGFace2 (VGG)]{
     \includegraphics[width=0.22\linewidth]{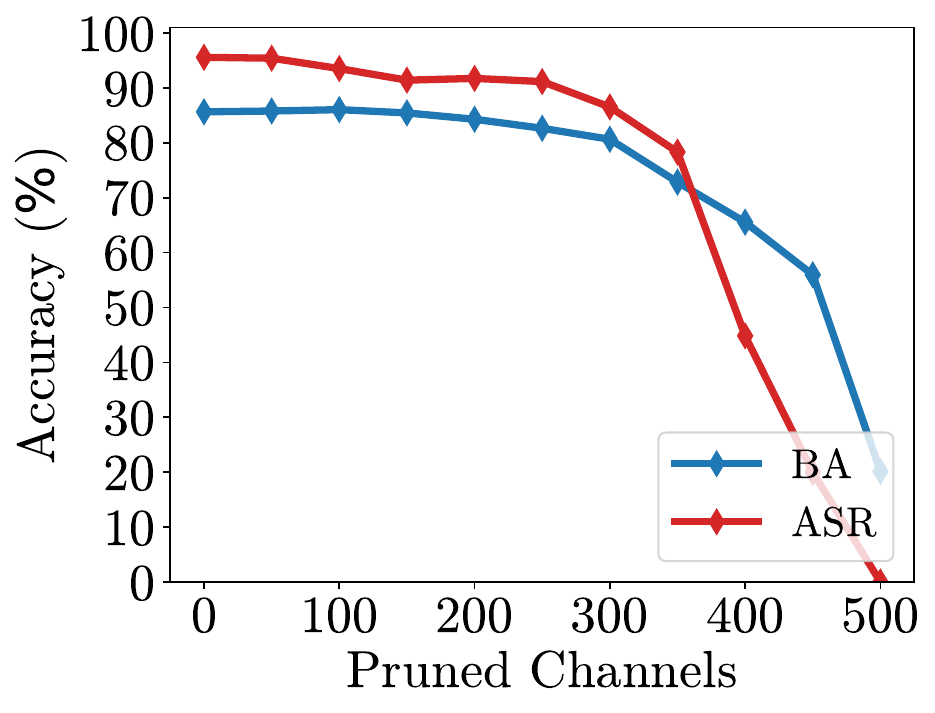}}
     \caption{The resistance of our SIBA to fine-pruning (FP).}
    \label{fig:fp}
    \vspace{-0.5em}
\end{figure*}



\begin{figure*}
    \centering
    \subfloat[CIFAR-10]{
     \includegraphics[width=0.9\linewidth]{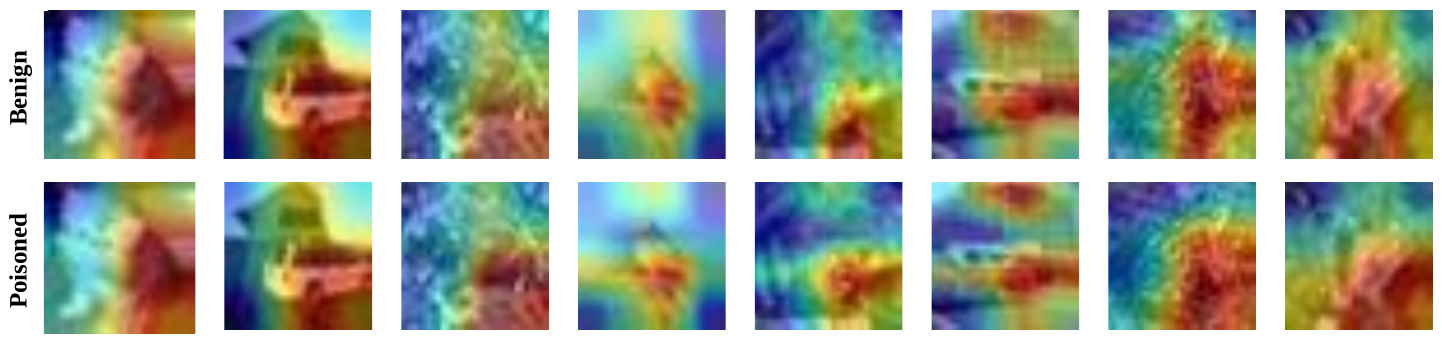}}
    
    \subfloat[VGGFace2]{
     \includegraphics[width=0.9\linewidth]{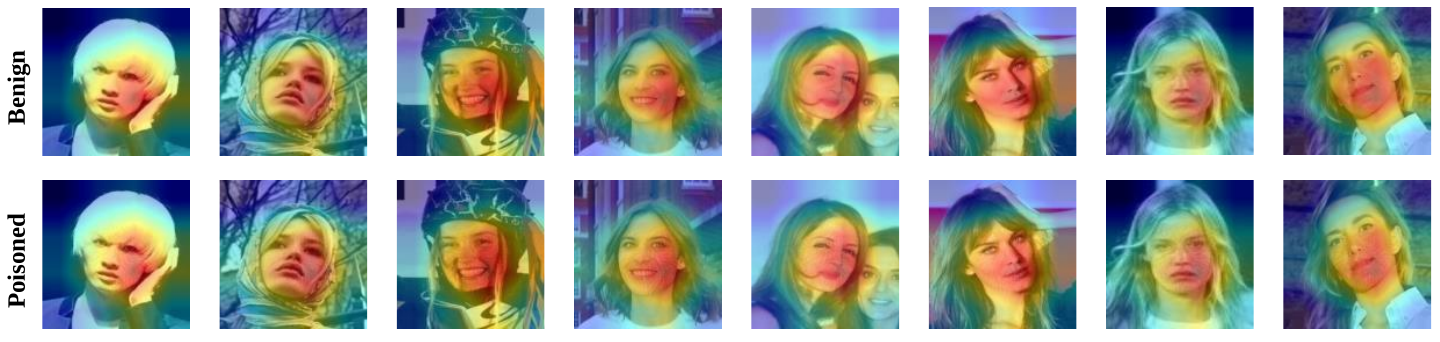}}
     \caption{The resistance of our SIBA to SentiNet. }
    \label{fig:cam}
\end{figure*}

\begin{figure}[!t]
    \centering
    \vspace{-1em}
\includegraphics[width=0.9\linewidth]{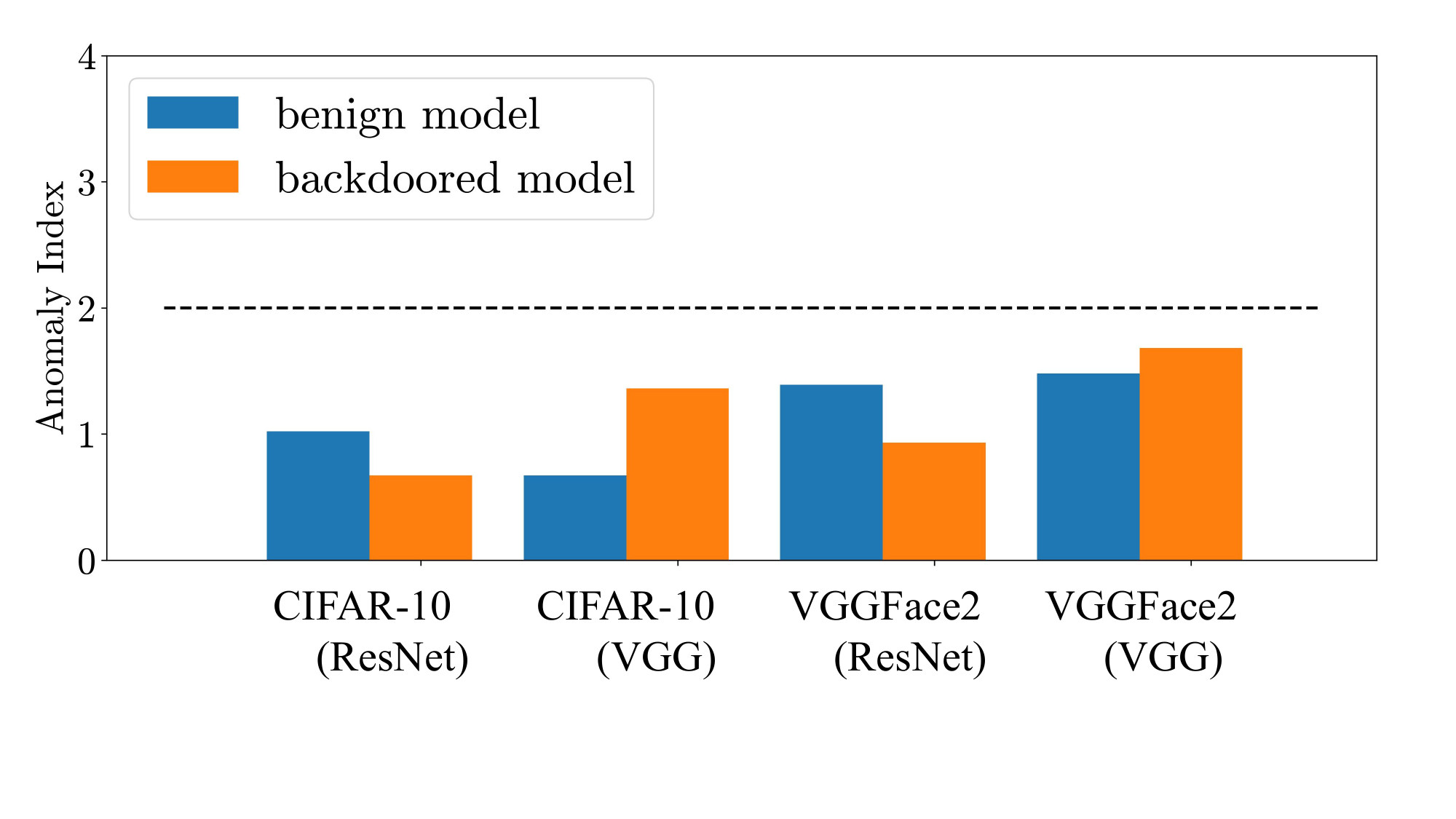}
     \vspace{-2.5em}
    \caption{The resistance of our SIBA to neural cleanse (NC).}
    \label{fig:nc}
    \vspace{-1em}
\end{figure}

\begin{table}[!t]
\centering
\caption{The resistance of our SIBA to Scale-Up. }
\vspace{-0.5em}
\scalebox{0.9}{
\begin{tabular}{c|c|cccc}
\toprule 
Dataset$\downarrow$ & Model$\downarrow$, Metric$\rightarrow$
& TPR & FPR & AUROC& ASR (\%) \\
\hline
\multirow{2}*{CIFAR-10} & ResNet & 0.5816 & 0.2067 & 0.7427 & 41.84  \\
&VGG & 0.4028 & 0.2135 &0.6369 & 59.72 \\
\midrule\multirow{2}*{VGGFace2} & ResNet& 0.0220  & 0.0855  &0.4428& 97.80 \\
& VGG & 0.0295 & 0.0830 & 0.4603 & 97.05 \\
\bottomrule 
\end{tabular}}
\label{tab:scale_up}
\vspace{-1em}
\end{table}

\vspace{0.3em}
\noindent{\bf The Resistance to Scale-Up.} As a representative black-box detection of poisoned testing samples with predicted labels, Scale-Up \cite{guo2023scale} discovered the phenomenon that the poisoned samples had the scaled prediction consistency when the pixel values were amplified and proposed to distinguish the poisoned samples by counting the predictions of scaled images. In our experiments, we use a scaling set with size 5 and set the threshold as $0.8$. We report true positive rate (TPR), false positive rate (FPR), area under the receiver operating characteristic (AUROC), and ASR in Table \ref{tab:scale_up}. As shown in the table, although Scale-Up can decrease the effectiveness of SIBA to some extent, the detection performance is far from satisfactory since the average ASR is still $>70\%$. In other words, our SIBA is resistant to the Scale-Up to a large extent.

\vspace{0.3em}
\noindent{\bf The Resistance to SentiNet.} As a representative white-box detection of poisoned testing samples, SentiNet \cite{chou2020sentinet} relies on model interpretability techniques to locate potential trigger regions. Grad-CAM uses the gradient with respect to the model's final layer and calculates the salience map of the input region to reflect the positive importance of the input image. In our experiments, we visualize the salience maps of some poisoned samples on CIFAR-10 and VGGFace2 datasets. As shown in Figure \ref{fig:cam}, the salience maps could not provide useful information to detect the trigger. Its failure is mostly because the trigger of SIBA is not a small-sized patch.

\vspace{0.3em}
\noindent {\bf The Resistance to Neural Cleanse (NC).} \textcolor{black}{As a representative model-level detection method, NC \cite{wang2019neural} first reverses possible triggers of the suspicious model and collects the $L_1$ values of the optimized trigger associated with each target label. Then, NC calculates the median absolute deviation of the group and the anomaly index of each label. If the anomaly index is larger than the threshold, the model is regarded as backdoored.} In our experiments, the threshold is 2 as suggested in its original paper. We use the Adam optimizer in which the learning rate is $0.1$. The coefficient of the regularization term is $0.001$ and the number of training epochs is 50. As shown in Figure \ref{fig:nc}, the NC is ineffective to detect our backdoored model since the anomaly index of the proposed method is always lower than the threshold value. This failure is mostly because our SIBA only needs to manipulate a small number of pixels such that the optimization of NC cannot catch our trigger location.

\vspace{0.3em}
\noindent \textcolor{black}{\bf The Resistance to Meta Neural Trojan Detection (MNTD).} \textcolor{black}{The MNTD \cite{xu2021detecting} is another famous model-level detection method. It trains a meta-classifier to determine whether a target model is backdoored. In our experiments, we set the query number 10 and use Adam optimizer \cite{kingma2014adam} with a 0.001 learning rate to train the meta classifier. Besides, we train 20 candidate models (10 SIBA-backdoored models and 10 clean models) and then calculate the trojan scores based on its meta classifier. The result (AUROC=$0.57$) indicates that SIBA can escape from MNTD. It is mostly because MNTD cannot capture sufficient trigger-related features due to our trigger sparseness. We will study it further in our future works.}



\subsection{Discussion}

\begin{figure}[!t]
    \centering
    \vspace{-1em}
    \subfloat[Airplane]
    {
    \includegraphics[width=0.33\linewidth]{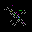}}
    \hspace{3em}
    \subfloat[Bird]{
     \includegraphics[width=0.33\linewidth]{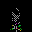}}

    \subfloat[Horse]{
     \includegraphics[width=0.33\linewidth]{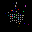}}
     \hspace{3em}
    \subfloat[Truck]{
     \includegraphics[width=0.33\linewidth]{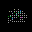}}
    \caption{The normalized SIBA triggers generated with different target classes on the CIFAR-10 dataset.}
    \label{fig:norm_tri}
\end{figure}

\subsubsection{\textcolor{black}{A Closer Look to the Effectiveness of our SIBA and its Connection to DNN Interpretability}}



\textcolor{black}{To understand the effectiveness of our SIBA, we also visualize the (normalized) trigger patterns when different target labels are adopted. As shown in Figure \ref{fig:norm_tri}, the SIBA trigger is always located in the main body of the object from the target class (such as the airplane's fuselage and wings in Figure \ref{fig:norm_tri}). In other words, the generated sparse trigger pattern is closely related to the concept of the target label. This phenomenon also (partially) explains the prediction and learning mechanisms of DNNs. Specifically, from the prediction perspective, it indicates that the misclassification of DNNs towards poisoned samples from their ground-truth label to the target one might be attributed to perturbations in regions associated with the target class. From the learning perspective, it implies that DNNs learn the trigger pattern as the representative of samples from the target class since we only re-assign the label of selected samples as the target label when generating the trigger pattern. It suggests that our attack may be a viable path toward understanding the learning and prediction principles of DNNs.} \textcolor{black}{Our results build an interesting connection between backdoor attacks and DNN interpretability that has not been shown before.}



\begin{figure}[!t]
    \centering
    \subfloat[BA]{
\includegraphics[width=0.46\linewidth]{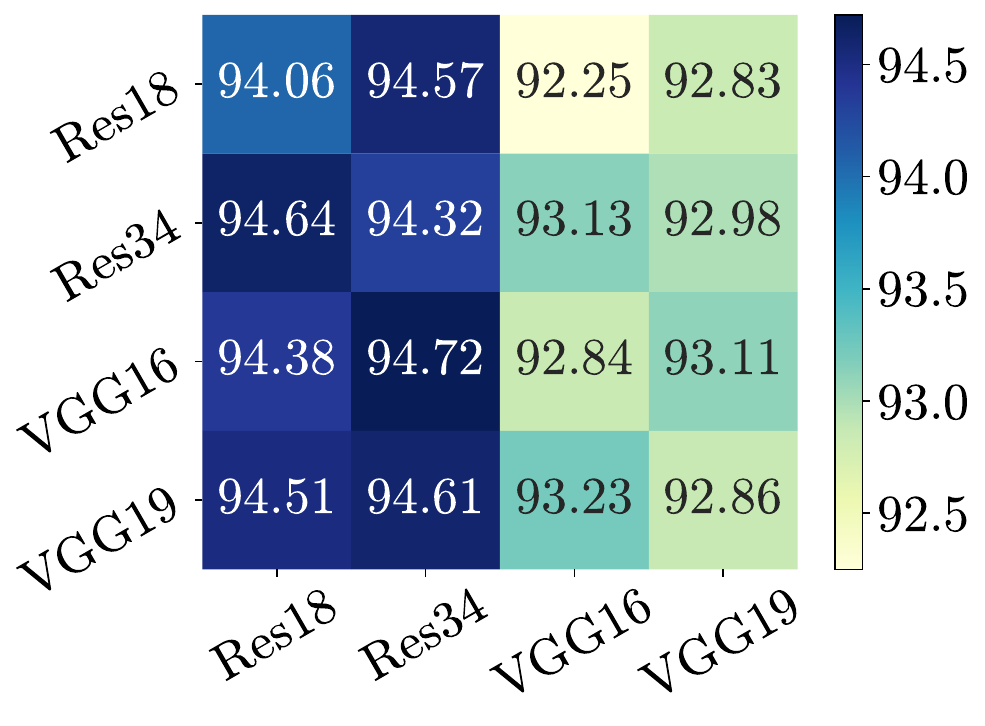}}
\hspace{1em}
    \subfloat[ASR]{
\includegraphics[width=0.46\linewidth]{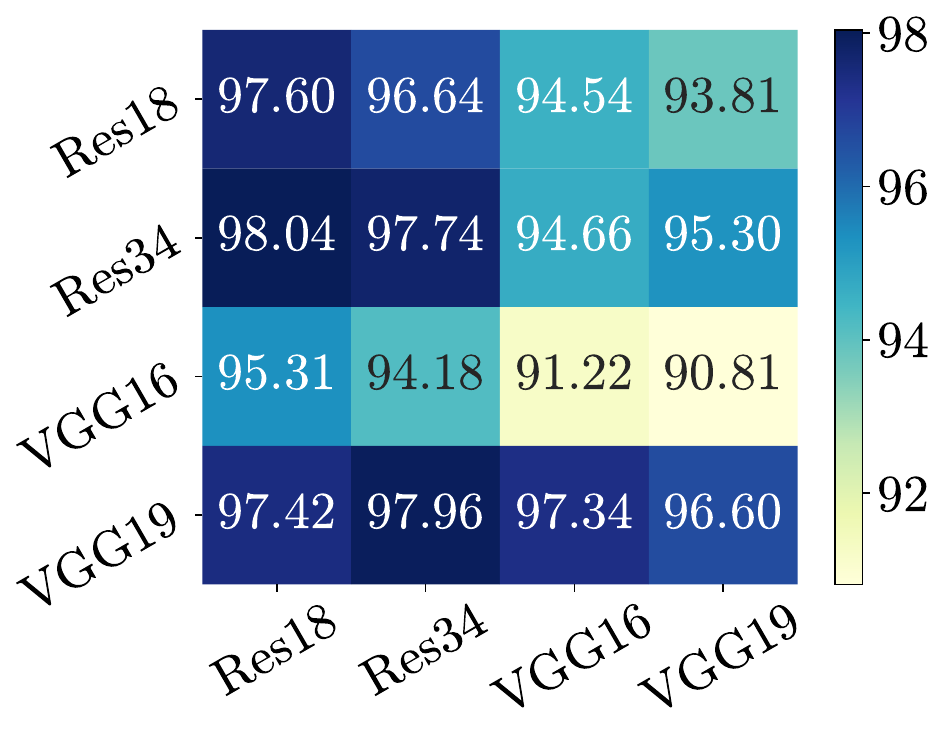}}
    \caption{The performance of our SIBA with different surrogate and victim structures on the CIFAR-10 dataset. \textbf{Row}: surrogate models; \textbf{Column}: victim models.}
    \label{fig:cifar_transfer}
\end{figure}

\vspace{0.3em}
\subsubsection{Attack Transferability with Different Model Structures}\label{sec:transfer} As described in Section \ref{sec:SIBA_method}, we need a pre-trained benign model to generate our SIBA trigger pattern. The experiments in Section \ref{sec:main_exp} are conducted based on the assumption that the surrogate model and victim model have the same model structure, which may not be feasible in practice since the adversaries have no information of the structure that victim users may use. In this part, we explore the transferability of our SIBA: `\textit{How effective is SIBA when the surrogate model is different from the victim model? }'. We select four network architectures: ResNet18, ResNet34, VGG16, and VGG19 on CIFAR-10 for discussions. Other settings are the same as those used in Section \ref{sec:main_results}. As shown in Figure \ref{fig:cifar_transfer}, our SIBA achieves consistently excellent attack performance under different settings, although the performance may have some fluctuations due to different model capacities. These results indicate that our SIBA method does not require knowing any information of victim users and therefore can serve as an effective poison-only backdoor attack.


\subsubsection{The Extension to All-to-all Setting} The experiments in Section \ref{sec:main_results} adopt the all-to-one setting, \textcolor{black}{where all poisoned samples are expected to be classified as the same target class.} In this part, we extend our SIBA to the all-to-all setting, in which the target class depends on the ground truth class of the poisoned sample. Specifically, we adopt the most classical transformation function `$c(y)=(y+1)\bmod C$' in this paper, \textcolor{black}{following the settings of existing papers \cite{gu2019badnets, doan2022marksman}.} In this case, the problem formulation of our SIBA is as follows: 
\begin{equation}
\begin{aligned}
    \min_{\bm t} \sum_{({\bm x},y)\in \mathcal{D}_v} &\mathcal{L}(f_{\bm b}({\bm x}+{\bm t}), c(y)) \\
    s.t. \ \lVert {\bm t} \rVert_0 \leq k&, \ \lVert {\bm t} \rVert_\infty \leq \epsilon.
\label{eq:a2a}
\end{aligned}
\end{equation}
We conduct experiments of the all-to-all SIBA attack on the CIFAR-10 dataset with ResNet18. The poisoning rate is increased to $10\%$ since the all-to-all attacks are more complicated than all-to-one methods. All other settings are the same as those used in Section \ref{sec:main_results}. As a result, the BA is $94.61\%$ and the ASR is $93.34\%$, indicating that our SIBA is feasible to be applied under the all-to-all setting.

\vspace{0.3em}
\subsubsection{SIBA with Limited Training Data}\label{sec:limit_data} In the previous sections, we assume that the adversary optimizes the SIBA trigger via the whole training set. However, in real scenarios, it might be infeasible to acquire the whole training set for the adversary to train the surrogate model. We hereby raise the question:`\textit{How effective is SIBA when the adversary has limited data?}'. In this part, we optimize our SIBA trigger based on a subset of the training set in which the data percentage ranges from $5\%$ to $20\%$. We report the BA of the surrogate model, and the BA and the ASR of the victim model. As shown in Table \ref{tab:diss_sub}, the degraded performance of the surrogate model does not mean the inefficiency of SIBA when only limited training data is adopted. Our SIBA achieves $>90\%$ ASR even when the adversary can only access to $10\%$ training data. These results verify the efficiency of our SIBA.

\begin{figure}
    \centering
\includegraphics[width=\linewidth]{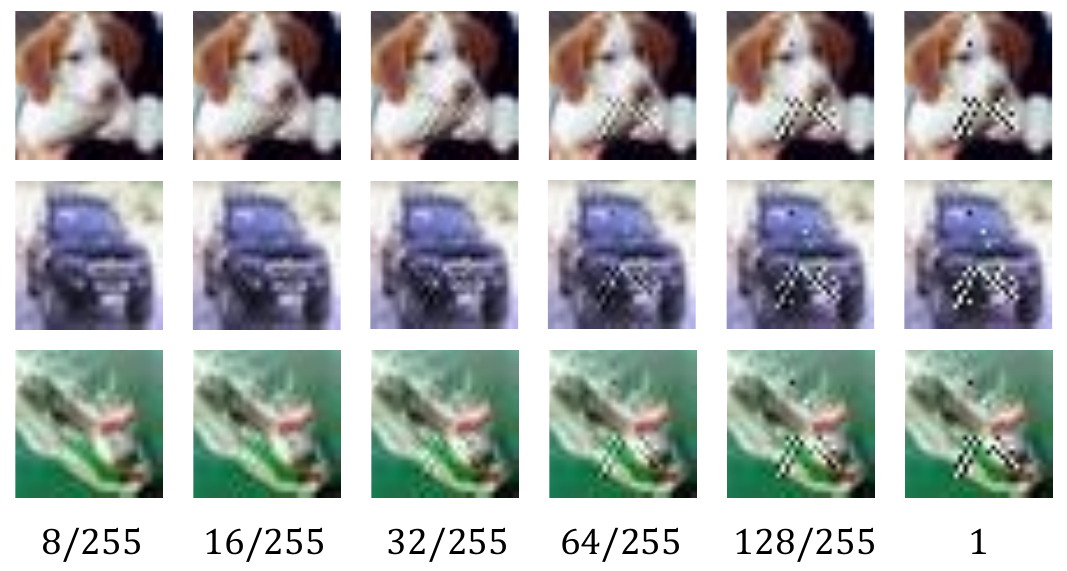}
    \caption{The example of poisoned samples with amplified triggers on the CIFAR-10 dataset.}
    \label{fig:amp_tri_sample}
\end{figure}

\begin{table}[!t]
\caption{Results with limited data on the CIFAR-10 dataset. }
\vspace{-0.3em}
\centering
\scalebox{1}{
\begin{tabular}{c|ccc}
\toprule 
\tabincell{c}{Metric$\rightarrow$\\
Data Percentage$\downarrow$}
& BA (Surrogate) & BA (Victim) & ASR (Victim) \\
\hline 
5\% & 57.55\% & 94.44\%  & 65.44\%  \\
10\% & 70.65\% & 94.25\% & 97.10\% \\
15\% & 79.32\%  & 94.11\%  & 95.49\% \\
20\% & 84.14\% & 94.83\% & 97.56\% \\
\hline
100\% & 94.67\% & 94.06\% & 97.60\% \\
\bottomrule 
\end{tabular}}
\label{tab:diss_sub}

\end{table}

\vspace{0.3em}
\subsubsection{SIBA with Asymmetric Triggers}
To further boost the attack effectiveness while maintaining attack stealthiness in practical scenarios, we explore the idea of asymmetric triggers \cite{chen2017targeted, qi2023revisiting} that maintains the original trigger during the training process but amplifies it during the inference time.  Specifically, we construct the test poisoned sample with the following formula: ${\bm x}_i+\epsilon\cdot\text{sign}({\bm t}_i), \ i=1,2,\cdots,d$, where $\epsilon$ controls the visibility. In our experiments, we set the number of maximum perturbed pixels as $50$. Other training details are consistent with those in Section \ref{sec:main_exp}. We illustrate the poisoned samples in Figure \ref{fig:amp_tri_sample} and depict the ASR curves in Figure \ref{fig:amplify}, from which we find that amplified triggers not only outperform the original triggers but also could be implemented as backdoor patches \cite{brown2017adversarial, liu2020bias} in physical world.  

\vspace{0.3em}
\textcolor{black}{\subsubsection{Effectiveness of SIBA under Clean-label Settings} 
To evaluate the effectiveness of SIBA under clean-label settings, we conduct experiments on the CIFAR-10 dataset. It is known that clean-label attacks are more challenging than poisoned-label ones and we thus set $k=200, \epsilon=16/255$. The results are summarized in Table \ref{tab:cl_siba}, from which we could find that our SIBA achieves $>90\%$ ASR with $5\%$ poisoning rate and outperforms the other baselines with a notable margin. However, we have to acknowledge that when the sparsity is further reduced ($e.g.$, $k=100$), clean-label settings may not achieve high ASRs like poisoned-label ones. We will explore more advanced optimization techniques to push the limit of sparsity under clean-label settings in our future work.}


\begin{figure}[!t]
    \centering
    \includegraphics[width=0.7\linewidth]{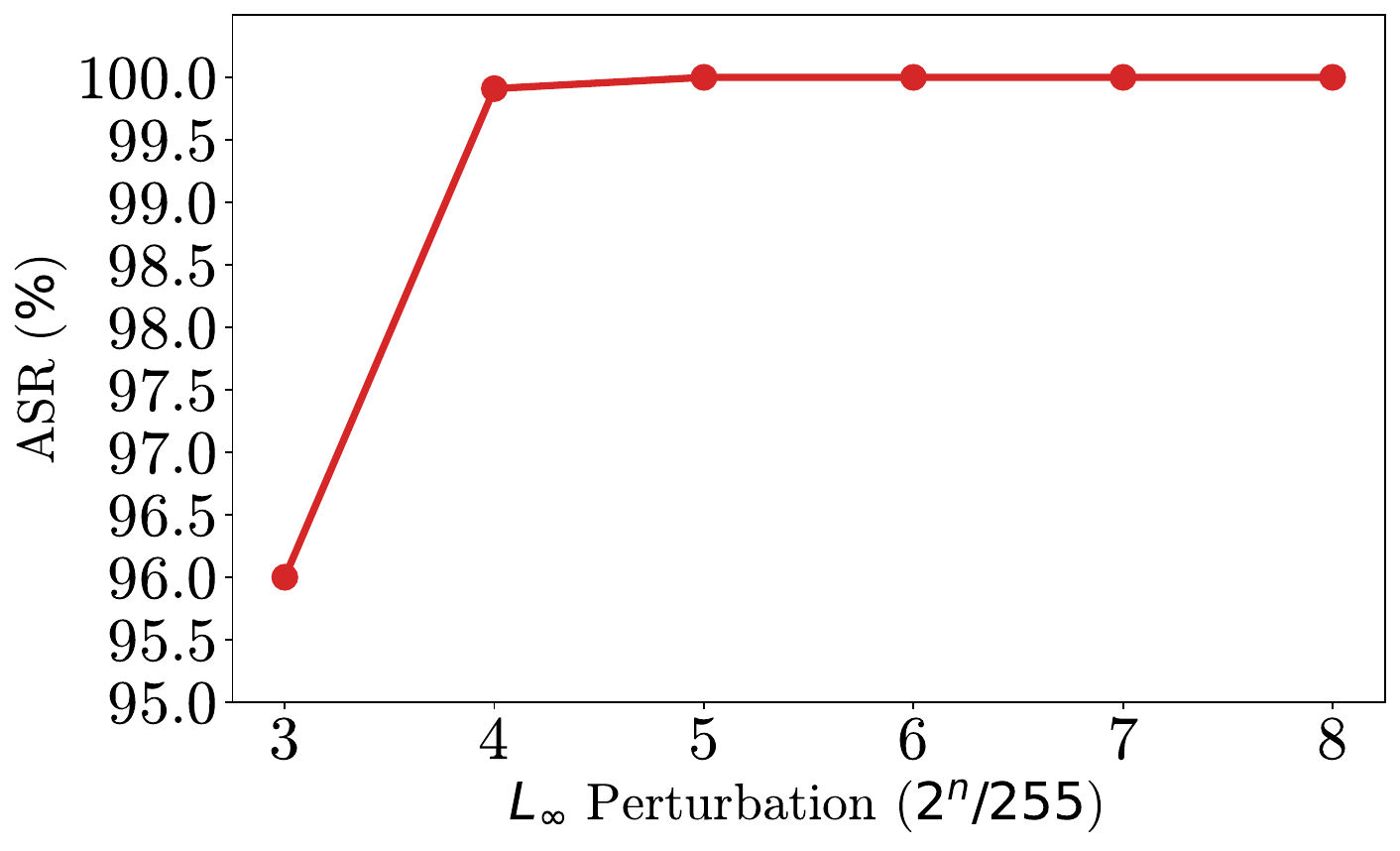}
    \caption{Results of SIBA with amplified triggers on CIFAR-10.}
    \label{fig:amplify}
\end{figure}

\begin{table}[!t]
\centering
\caption{\textcolor{black}{Comparison with the baseline attacks under clean-label settings on the CIFAR-10 dataset.}}
\vspace{-0.3em}
\scalebox{0.83}{
\begin{tabular}{c|c|ccccc }
\toprule
    Method$\downarrow$  & Metric$\downarrow$, Poisoning Rate$\rightarrow$ & $1\%$ & $2\%$  & $3\%$ & $4\%$  & $5\%$  \\
\midrule
 \multirow{2}*{Random} & BA (\%) & 94.51  & 94.42 & 94.56  & 94.10  & 93.87 \\
 & ASR (\%) & 1.51  &  4.91  & 45.58  &  68.89  & 79.81 \\
 \midrule
 \multirow{2}*{Sparse} & BA (\%) & 94.48  & 94.80  & 94.26  & 94.34  & 94.06 \\
 & ASR (\%) & 24.76  &  46.49  &  59.87 &  61.22  &  73.47\\
 \midrule
 \multirow{2}*{SIBA} & BA (\%) & 94.44 & 94.22  & 94.45  & 94.40  & 94.17 \\
 & ASR (\%) & 62.35  &  75.57  &  76.89 &  86.30  & 91.73\\
 \bottomrule
\end{tabular}
}
\label{tab:cl_siba}
\end{table}

\section{Conclusion}
\label{sec:conclusion}
In this paper, we proposed a novel backdoor attack, $i.e.$, sparse and invisible backdoor attack (SIBA), to achieve attack effectiveness and attack stealthiness simultaneously. Our SIBA method only needs to modify a few pixels of the original images to generate poisoned samples and is \textcolor{black}{human-imperceptible} due to the low modification magnitude. To achieve it, we formulated the trigger generation as a bi-level optimization problem with sparsity and invisibility constraints and proposed an effective method to solve it. We conducted extensive experiments on benchmark datasets, verifying the effectiveness, the resistance to potential defenses, and the flexibility under different settings of our attack. We hope our method can provide a new angle and deeper understanding of backdoor mechanisms, to facilitate the design of more secure and robust DNNs.

\section*{Acknowledgments}
The work is supported in part by the National Natural Science Foundation of China (U20A20178, 62171248, U20B2049, and U21B2018), Shenzhen Science and Technology Program (JCYJ20220818101012025), and the PCNL KEY project (PCL2023AS6-1). This work was mostly done when Yiming Li was a Research Professor at Zhejiang University. He is currently a Research Fellow at Nanyang Technological University.


\appendix



\subsection{Proof of Lemma 1}

\setcounter{theorem}{0}
\setcounter{lemma}{0}

\begin{lemma}
Assuming $\alpha=0$ in Equation \ref{eq:fgsm} and the initial value of ${\bm t}_i$ is 0, Problem \ref{eq:square} has the analytical solution as follows:
\begin{equation}
    {\bm t}_{i+1, j} = \left\{
\begin{aligned}
 &{\bm v}_{i, j} &\text{if} \ j \in C^{\prime}\\
&0  &\text{if} \  j \notin C^{\prime}
\end{aligned},
\right.
\end{equation}
where $C^{\prime}$ represents the subscript group which has the largest $k$ element of $\lvert \nabla_{\bm t} h({\bm t}_i) \rvert$.
\end{lemma}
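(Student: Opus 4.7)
The plan is to exploit the coordinate-wise separability of Problem~\ref{eq:square}. Both the squared-distance objective $\|\bm{s}_i - \bm{u}\|_2^2 = \sum_{j=1}^d (\bm{s}_{i,j} - \bm{u}_j)^2$ and the per-coordinate feasibility condition $\bm{u}_j \in \{0, \bm{v}_{i,j}\}$ decouple across indices, with the only coupling arising through the global cardinality bound $\|\bm{u}\|_0 \leq k$. The projection therefore reduces to choosing a subset $S \subseteq \{1,\dots,d\}$ of size at most $k$ and setting $\bm{u}_j = \bm{v}_{i,j}$ for $j \in S$ and $\bm{u}_j = 0$ otherwise.

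The second step is to compute the per-coordinate ``activation gain''. Leaving $\bm{u}_j = 0$ contributes $\bm{s}_{i,j}^2$ to the objective, while activating contributes $(\bm{s}_{i,j} - \bm{v}_{i,j})^2$, so the gain of activation is $g_j := \bm{s}_{i,j}^2 - (\bm{s}_{i,j} - \bm{v}_{i,j})^2 = 2\bm{s}_{i,j}\bm{v}_{i,j} - \bm{v}_{i,j}^2$. A standard exchange argument shows that the optimal $S$ must consist of the $k$ indices with the largest $g_j$ (ties broken arbitrarily): if any included index had strictly smaller gain than an excluded one, swapping them would strictly reduce the objective, contradicting optimality.

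The third step is to invoke the two stated assumptions to convert the ranking by $g_j$ into a ranking by $|\nabla h_j(\bm{t}_i)|$. With $\bm{t}_i = \bm{0}$, we have $\bm{s}_i = -\alpha\,\nabla h(\bm{0})$, so each $\bm{s}_{i,j}$ is proportional to $\nabla h_j(\bm{0})$ with a common sign-reversing factor. Under the stated reduction of Eq.~\ref{eq:fgsm}, $\bm{v}_{i,j}$ reduces to $-\alpha\epsilon\,\mathrm{sign}(\nabla h_j(\bm{0}))$ (the FGSM step lies inside $\mathcal{B}_\epsilon$ so no clipping occurs), which points in the direction $-\mathrm{sign}(\nabla h_j(\bm{0}))$ with magnitude independent of $j$. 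Hence $\bm{s}_{i,j}\bm{v}_{i,j}$ equals a positive constant times $|\nabla h_j(\bm{0})|$, while $\bm{v}_{i,j}^2$ is a coordinate-independent constant. Consequently $g_j$ is a strictly monotone increasing function of $|\nabla h_j(\bm{t}_i)|$, so the top-$k$ indices by $g_j$ coincide with the set $C'$ defined in the lemma. Substituting into the separable formulation yields exactly Equation~\ref{eq:l0_proj}.

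The main obstacle will be pinning down the precise reading of the assumption ``$\alpha = 0$ in Equation~\ref{eq:fgsm}'' (together with $\bm{t}_i = \bm{0}$) so that the monotone link between $g_j$ and $|\nabla h_j|$ is rigorous rather than heuristic, and so that the $L_\infty$ clipping in $\Pi_{\mathcal{B}_\epsilon}$ is genuinely inactive. Once the step-size conventions are fixed consistently, the remaining separability and top-$k$ rearrangement arguments are routine, and the analytical form of the projection follows directly.
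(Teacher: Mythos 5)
Your proposal is correct and follows essentially the same route as the paper: the paper expands $\lVert {\bm s}_i-{\bm u}\rVert_2^2$ over a size-$k$ support $C$ into subset-independent constants minus $2\alpha\epsilon\sum_{j\in C}\lvert(\nabla_{\bm t} h({\bm t}_i))_j\rvert$, which is exactly your per-coordinate activation gain $g_j=2{\bm s}_{i,j}{\bm v}_{i,j}-{\bm v}_{i,j}^2$ summed over the selected indices, and both arguments then conclude by taking the top-$k$ coordinates of $\lvert\nabla_{\bm t} h\rvert$. The only difference is bookkeeping (your gain-plus-exchange phrasing versus the paper's direct expansion), and your caveat about how to read the ``$\alpha=0$'' assumption matches an ambiguity that the paper's own proof also glosses over without affecting the conclusion.
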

\begin{proof}
We denote $C$ as the $k$-dimension subset of ${\bm v}_i$ such that ${\bm u}_j={\bm v}_{i,j}$ if $j\in C$ and ${\bm u}_j=0$ if $j\notin C$ and assume that the initial value of ${\bm t}_i$ is 0. Then, the objective in the equation (\ref{eq:square}) could be derived as follows:
\begin{equation}
\begin{aligned}
\lVert {\bm s}_i- {\bm u}\rVert_2^2=&\sum_{j\in C} ({\bm s}_{i,j}- {\bm v}_{i,j})^2 +\sum_{j\notin C} {\bm s}_{i,j}^2\\
=&\sum_{j\in C} \left(\alpha\cdot(\nabla_{\bm t} h({\bm t}_i))_j-\epsilon\cdot \text{sign}(\nabla_{\bm t} h({\bm t}_i))_j\right)^2+\\
&\sum_{j\notin C} \left(\alpha\cdot\nabla_{\bm t} h({\bm t}_i)\right)_j^2\\
=&\sum_{j\in C}  \left(\left(\alpha\cdot\lvert(\nabla_{\bm t} h({\bm t}_i))_j\rvert-\epsilon\right)\cdot \text{sign}(\nabla_{\bm t} h({\bm t}_i))_j\right)^2+\\
&\sum_{j\notin C} \left(\alpha\cdot\nabla_{\bm t} h({\bm t}_i)\right)_j^2\\
=&\sum_{j\in C} \left(\alpha\cdot\lvert(\nabla_{\bm t} h({\bm t}_i))_j\rvert-\epsilon\right)^2+\\
&\sum_{j\notin C} (\alpha\cdot\nabla_{\bm t} h({\bm t}_i))_j^2\\
=&\sum_{j\in C}  ((\alpha\cdot\nabla_{\bm t} h({\bm t}_i))_j)^2-\sum_{j\in C}  2\alpha\epsilon\cdot\lvert\nabla_{\bm t} h({\bm t}_i)_j\rvert+\\
&\sum_{j\in C}\epsilon^2+\sum_{j\notin C} ((\alpha\cdot\nabla_{\bm t} h({\bm t}_i))_j)^2\\
=&\lVert\alpha\cdot\nabla_{\bm t} h({\bm t}_i)\rVert_2^2+k\epsilon^2-2\alpha\epsilon\sum_{j\in C}\lvert\nabla_{\bm t} h({\bm t}_i))_j\rvert
\end{aligned}
\label{eq:proof}
\end{equation}
Observing the equation (\ref{eq:proof}), the first two terms are constants and the equation is minimized when the last term contains the largest $k$ element of  $\lvert \nabla_{\bm t} h({\bm t}_i) \rvert$.
\end{proof}

\bibliographystyle{IEEEtran}
\bibliography{tifs.bib}

\begin{IEEEbiography}[{\includegraphics[width=0.9in,height=1.25in,clip,keepaspectratio]{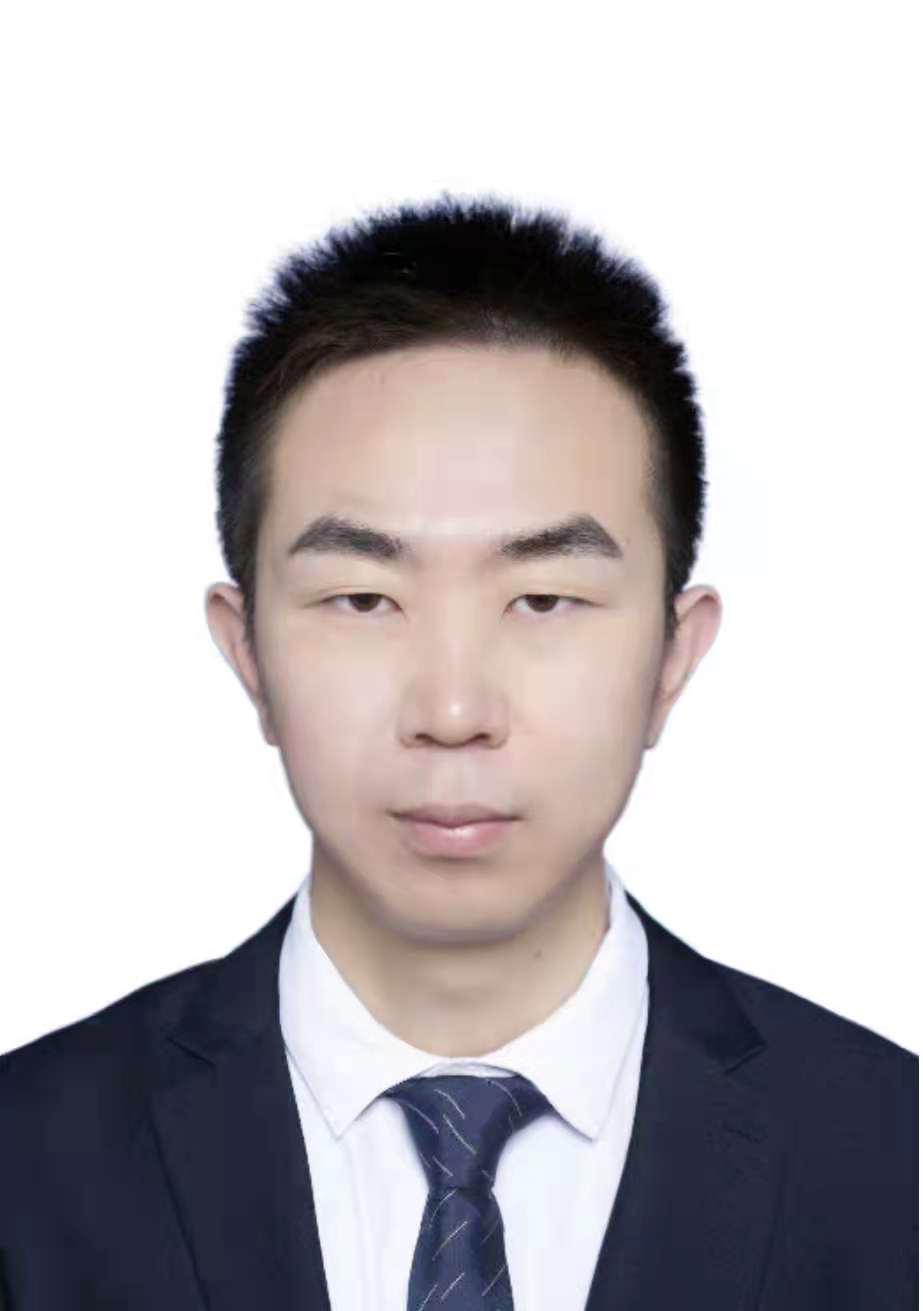}}]{Dr. Yinghua Gao} received his Ph.D. degree from the Department of Computer Science and Technology, Tsinghua University in 2024 and B.S. degree from the Department of Mathematics, Nankai University in 2018. His research interests are primarily in trustworthy machine learning.
\end{IEEEbiography}

\begin{IEEEbiography}[{\includegraphics[width=0.9in,height=1.25in,clip,keepaspectratio]{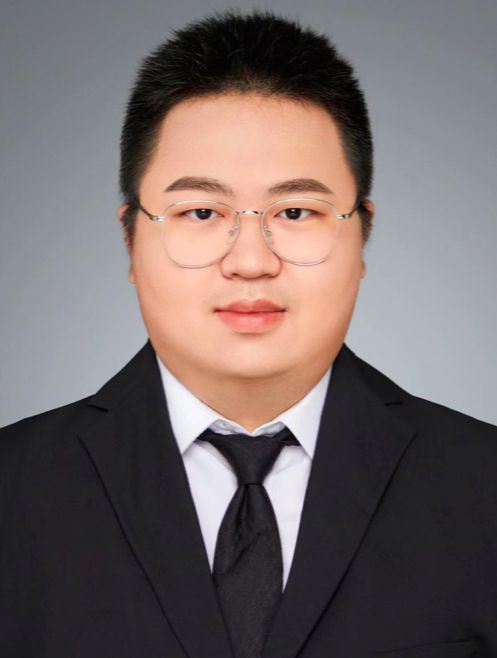}}]{Dr. Yiming Li} is currently a Research Fellow at Nanyang Technological University. Before that, he was a Research Professor in the State Key Laboratory of Blockchain and Data Security at Zhejiang University and also in HIC-ZJU. He received his Ph.D. degree with honors in Computer Science and Technology from Tsinghua University in 2023 and his B.S. degree with honors in Mathematics from Ningbo University in 2018. His research interests are in the domain of Trustworthy ML and Responsible AI, especially backdoor learning and AI copyright protection. His research has been published in multiple top-tier conferences and journals, such as ICLR, NeurIPS, and IEEE TIFS. He served as the Area Chair of ACM MM, the Senior Program Committee Member of AAAI, and the Reviewer of IEEE TPAMI, IEEE TIFS, IEEE TDSC, etc. His research has been featured by major media outlets, such as IEEE Spectrum. He was the recipient of the Best Paper Award at PAKDD in 2023 and the Rising Star Award at WAIC in 2023.
\end{IEEEbiography}

\begin{IEEEbiography}[{\includegraphics[width=1in,height=1.25in,clip,keepaspectratio]{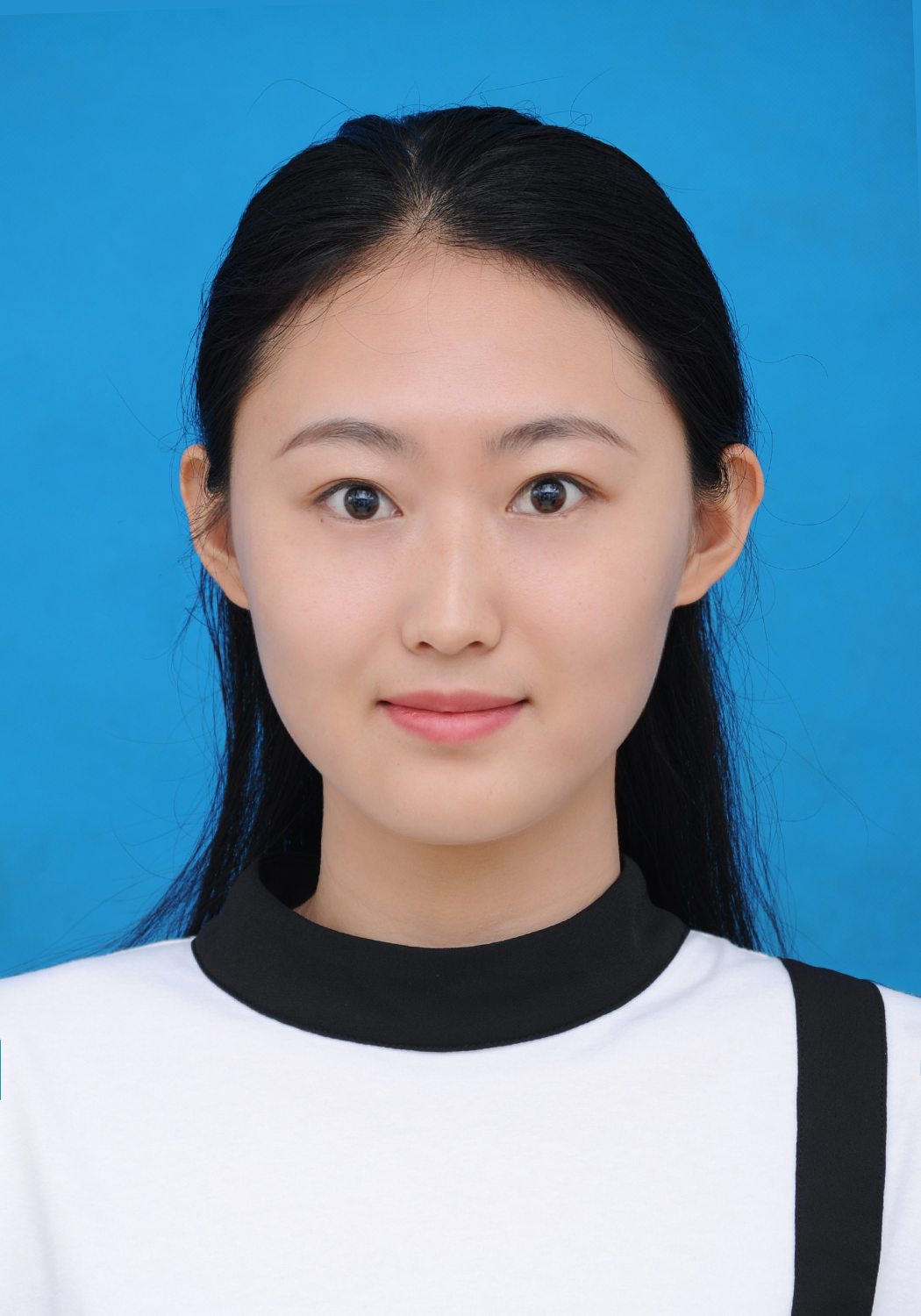}}]{Dr. Xueluan Gong} received her B.S. degree in Computer Science and Electronic Engineering from Hunan University in 2018. She received her Ph.D. degree in Computer Science from Wuhan University in 2023. Her research interests include AI security and information security.
\end{IEEEbiography}

\begin{IEEEbiography}[{\includegraphics[width=1in,height=1.25in,clip,keepaspectratio]{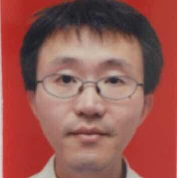}}]{Dr. Zhifeng Li} is currently a top-tier principal research scientist at Tencent Data Platform. He received the Ph.D. degree from the Chinese University of Hong Kong in 2006. After that, He was a postdoctoral fellow at the Chinese University of Hong Kong and Michigan State University for several years. Before joining Tencent, he was a full professor with the Shenzhen Institutes of Advanced Technology, Chinese Academy of Sciences. His research interests include deep learning, computer vision and pattern recognition, and face detection and recognition. He is currently serving on the Editorial Boards of Pattern Recognition, IEEE Transactions on Circuits and Systems for Video Technology, and Neurocomputing. He is a fellow of the British Computer Society (FBCS).
\end{IEEEbiography}

\begin{IEEEbiography}[{\includegraphics[width=0.9in,height=1.25in,clip,keepaspectratio]{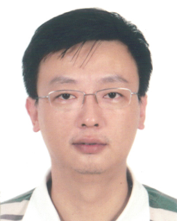}}]{Dr. Shu-Tao Xia} received the B.S. degree in mathematics and the Ph.D. degree in applied mathematics from Nankai University, Tianjin, China, in 1992 and 1997, respectively. Since January 2004, he has been with the Tsinghua Shenzhen International Graduate School of Tsinghua University, Guangdong, China, where he is currently a full professor. From September 1997 to March 1998 and from August to September 1998, he visited the Department of Information Engineering, The Chinese University of Hong Kong, Hong Kong. His research interests include coding and information theory, machine learning, and deep learning. His papers have been published in multiple top-tier journals and conferences, such as IEEE TPAMI, IEEE TIFS, IEEE TDSC, CVPR, ICLR, NeurIPS.
\end{IEEEbiography}

\begin{IEEEbiography}[{\includegraphics[width=1in,height=1.25in, clip,keepaspectratio]{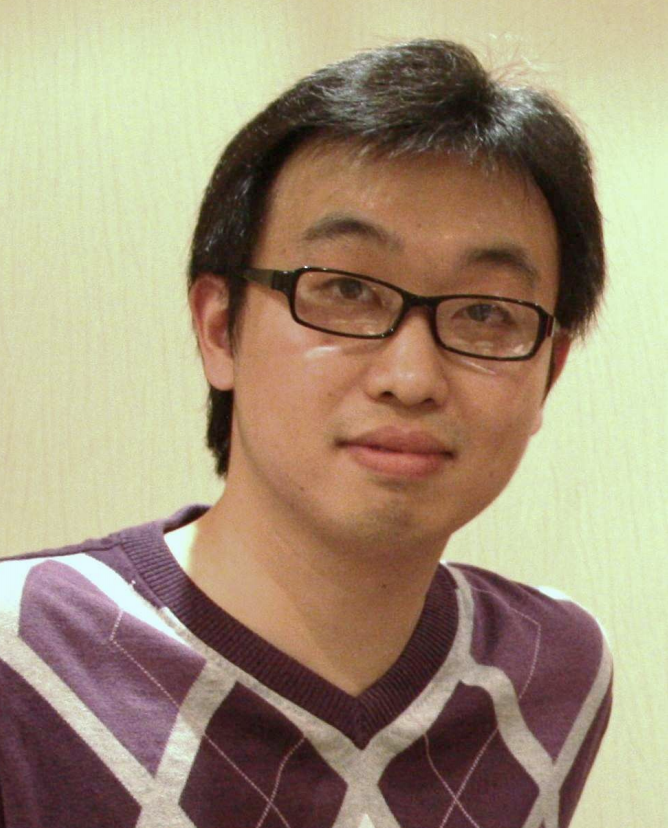}}]{Dr. Qian Wang} (Fellow, IEEE) is a Professor in the School of Cyber Science and Engineering at Wuhan University, China. He was selected into the National High-level Young Talents Program of China, and listed among the World's Top 2\% Scientists by Stanford University. He also received the National Science Fund for Excellent Young Scholars of China in 2018. He has long been engaged in the research of cyberspace security, with focus on AI security, data outsourcing security and privacy, wireless systems security, and applied cryptography. He was a recipient of the 2018 IEEE TCSC Award for Excellence in Scalable Computing (early career researcher) and the 2016 IEEE ComSoc Asia-Pacific Outstanding Young Researcher Award. He has published 200+ papers, with 120+ publications in top-tier international conferences, including USENIX NSDI, ACM CCS, USENIX Security, NDSS, ACM MobiCom, ICML, etc., with 20000+ Google Scholar citations. He is also a co-recipient of 8 Best Paper and Best Student Paper Awards from prestigious conferences, including ICDCS, IEEE ICNP, etc. In 2021, his PhD student was selected under Huawei's  `Top Minds' Recruitment Program. He serves as Associate Editors for IEEE Transactions on Dependable and Secure Computing (TDSC) and IEEE Transactions on Information Forensics and Security (TIFS). 
\end{IEEEbiography}

\end{document}